\def\eqref#1{equation~\ref{#1}}
\def\1{\bm{1}}
\DeclareMathAlphabet{\mathsfit}{\encodingdefault}{\sfdefault}{m}{sl}
\SetMathAlphabet{\mathsfit}{bold}{\encodingdefault}{\sfdefault}{bx}{n}
\newcommand{\R}{\mathbb{R}}
\DeclareMathOperator*{\argmax}{arg\,max}
\DeclareMathOperator*{\argmin}{arg\,min}
\newcommand{\Ex}[2]{\mathbb{E}_{#1}\left[#2\right]}
\newcommand{\dkl}{D_\text{KL}}
\newcommand{\dchi}{D_{\chi^2}}
\theoremstyle{plain}
\newtheorem{theorem}{Theorem}[section]
\newtheorem{proposition}[theorem]{Proposition}
\theoremstyle{definition}
\theoremstyle{remark}
\title{SequenceMatch: Imitation Learning for \qquad Autoregressive Sequence Modelling with Backtracking}
\author{%
  Chris Cundy$^1$ \quad Stefano Ermon$^{1}$ \vspace{2pt}\\
  $^1$Department of Computer Science, Stanford University\\
  \texttt{\{cundy, ermon\}@cs.stanford.edu}\\
}
\begin{document}

\maketitle

\begin{abstract}
  In many domains, autoregressive models can attain high likelihood on the task of predicting the next observation.
  However, this maximum-likelihood (MLE) objective does not necessarily match a downstream use-case of autoregressively generating high-quality sequences.
  The MLE objective weights sequences proportionally to their frequency under the data distribution, with no guidance for the model's behaviour out of distribution (OOD): leading to compounding error during autoregressive generation.
  In order to address this compounding error problem, we formulate sequence generation as an imitation learning (IL) problem. This allows us to  minimize a variety of divergences between the distribution of sequences generated by an autoregressive model and sequences from a dataset, including divergences with weight on OOD generated sequences.
  The IL framework also allows us to incorporate backtracking by introducing a \texttt{backspace} action into the generation process. This further mitigates the compounding error problem by allowing the model to revert a sampled token if it takes the sequence OOD.
  Our resulting method, SequenceMatch, can be implemented without adversarial training or architectural changes.
  We identify the SequenceMatch-$\chi^2$ divergence as a more suitable training objective for autoregressive models which are used for generation. We show that empirically, SequenceMatch training leads to improvements over MLE on text generation with language models and arithmetic.
\end{abstract}

\section{Introduction}
Autoregressive models such as the GPT series of causally masked transformers~\citep{brownLanguageModelsAre2020, radfordlanguage} are able to perform a variety of downstream tasks such as question-answering, translation, and summarization, after simply training on a large corpus of text with the objective of predicting the next token given the previous tokens. However, autoregressive language models suffer from a variety of pathological behavior when deployed on the task of free-form text generation \citep{holtzman2019curious, welleck2019neural}, particularly at lower generation temperatures or with smaller models. These include generating the same token or series of token repeatedly, or generating gibberish outputs.

This problem of neural text degeneration has been linked to the training objective for LLMs, which trains a conditional distribution for the next token given a (partial) sentence~\citep{fu2021theoretical}.
When deployed in an autoregressive fashion, the model has its own outputs as inputs, resulting in a compounding error problem that rapidly takes the model out of distribution (OOD). This compounding error problem is also a key issue in the imitation learning subfield of reinforcement learning, where the goal is to learn a policy (a distribution over next actions given the past) which results in trajectories similar to a set of expert trajectories.
The approach of directly matching the expert's actions leads to a compounding error~\citep{rossReductionImitationLearning2011}, which has led to several works proposing to address this problem by minimizing alternative divergences~\citep{aroraWhyExposureBias2022, shiDiverseTextGeneration2018}. These alternative divergences encourage the policy to return to expert states if the generated trajectory starts to diverge from them.
We argue that two key issues can prevent autoregressive models trained with maximum-likelihood from generating fluent sequences at evaluation time. First is the divergence measure used to evaluate the difference between the model and the data distribution. Because the MLE loss does not have any contribution from OOD sequences, the behavior of the model on OOD sequences is unconstrained. We address this by minimizing the \(\chi^2\)-divergence between a mixture of the data and autoregressively generated sequences. This divergence is known to perform much better than MLE in imitation learning \citep{garg2021iq, al2023ls}.

Secondly, if a model generates an OOD token, there may be no natural continuation which is similar to sequences from the data distribution, and so the model may be unable to return to the data distribution even if our \(\chi^2\)-divergence encourages this. To address this, we augment the generation process with a backspace action \texttt{<bkspc>}, which deletes the previous token, and allows the model to correct for erroneous generations.
By incorporating recent work in non-adversarial imitation learning \citep{garg2021iq}, our method, \emph{SequenceMatch}, is able to train autoregressive models against alternative divergences such as the \(\chi^2\)-mixture divergence while augmenting the policy with a backspace action. The SequenceMatch loss is a fully supervised loss without adversarial training, and can be applied on top of pretrained models as a finetuning step.
To summarize our contributions:

\begin{itemize}
\item We formulate the sequence generation problem as an imitation learning (IL) problem, and formulate a general non-adverarial objective for minimizing divergences between occupancy measures based on \citep{garg2021iq}, handling (among others) the forward and reverse KL-divergence, JS-divergence, and \(\chi^2\) divergence.
\item We develop a novel masking scheme allowing training of a transformer-based autoregressive model with a backspace action with no additional overhead vs MLE.
\item Finally we evaluate the empirical performance of SequenceMatch-trained models, showing improved performance over the maximum likelihood objective in text generation and arithmetic.
\end{itemize}
\begin{figure*}[t]
  \centering
  \includegraphics[height=0.2\textheight]{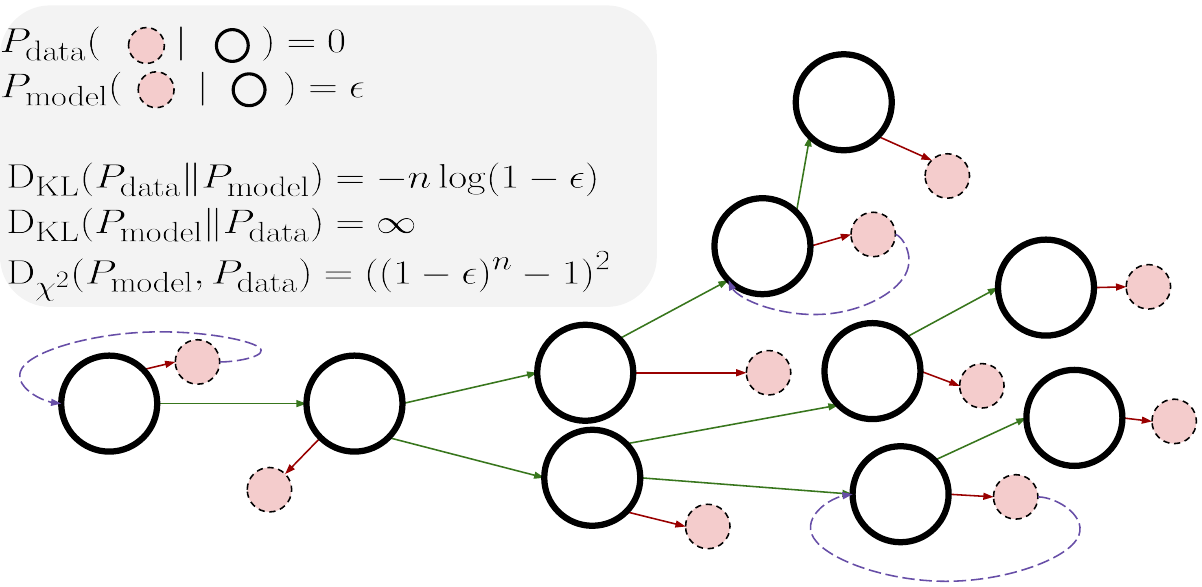}
  \caption{A toy model of an autoregressive generation problem, such as language modelling. Our task is to learn a set of conditional distributions that continue the sequence similarly to those sequences in the dataset (green arrows), and avoid incorrect next tokens (red arrows). Our method trains against divergences that more heavily punish out-of-distribution sequences. We additionally introduce a backspace action which can backtrack from an erroneous token (dashed purple arrows).}
  \label{fig:motivation}
\end{figure*}
\section{Preliminaries: Training Objectives for Autoregressive Models}
\subsection{KL-Divergence}
\label{sec:kl-divergence}
Typically, autoregressive models are trained against a maximum-likelihood objective. This objective can be motivated by treating our dataset as consisting of sequences of random variables \((x_1, \ldots, x_N)\), with a corresponding probability distribution \(P_\text{data}(x_1, \ldots, x_N)\), with a fixed length \(N\). The goal is to learn a parameterized model \(P_\theta(x_1, \ldots, x_N)\) that is close to \(P_\text{data}\). The KL-divergence between the data distribution and the model has a useful decomposition: \(\dkl (P_\text{data}\|P_\theta) = -\Ex{x_{1:N} \sim P_\text{data}}{\sum_{i}^N\log P_\theta(x_{i}|x_{<i})} + C\),
  where \(C\) is a constant that does not depend on \(\theta\). For a dataset \(\mathcal{D} = \{x_{1:N}^j\}_{j=0}^{N_{\text{data}}}\) of sequences drawn i.i.d. from \(P_\text{data}\), this can be approximated with an estimator \({\hat \dkl} (P_\text{data}\|P_\theta) = \tfrac{1}{N_\text{data}}\sum_j \sum_{i}^N\log P_\theta(x_{i}^j|x_{<i}^j) + C'\).
  Hence, minimizing the KL-divergence is equivalent to maximizing the model's (log-) likelihood of the next element in the sequence given the previous elements.
  The (typically unknown) density under the data distribution \(P_\text{data}\) is not required.
  In some domains, the length of the sequences \(n_j\) differs in each example \(j\), which can be incorporated by choosing an effective length \(N = \max n_i\), and treating all sequences shorter than \(N\) as having a sequence of padding tokens appended\footnote{Some care is required, as averaging the loss of each example over its length gives an inconsistent estimator.}. In the sequel with some abuse of notation we will write \(P_\text{data}(s_n)\) for \(\sum_{i}^n\log P_\theta(x_{i}|x_{<i})\), for partial sequences that may not terminate until after \(n\), and \(P_\text{data}(x|s_n)\) to signify the probability of the next token given a partial sequence. 
  \subsubsection{Limitations of the KL-divergence}
  However, while it is clear that minimizing the KL-divergence will result in \(P_\theta = P_\text{data}\) (for a sufficiently flexible parameterization \(P_\theta\)), it is not obvious what the behaviour is of models which \emph{approximately} minimize the KL-divergence. In figure \ref{fig:motivation}, a chain distribution is shown, with sequences of length \(N\). The model \(P_\theta\) has an \(\epsilon\) error on each conditional, where an error leads to an OOD sequence which has no support under the data distribution. This leads to an error in the KL metric of order \(n\epsilon\).
  However, the probability of getting to the end of the chain before an incorrect token is picked is \(1 - (1 - \epsilon)^n\), and so the value of the KL-divergence is not a good metric if our main quantity of interest is how often generated sequences are in-distribution.
  Furthermore, the KL-divergence weights the loss by the frequency under the data distribution, leaving
  the model's behavior out-of-distribution from the data essentially undetermined.

  In non-autoregressive generative modelling, several different divergences are commonly used, such as the Wasserstein distance \citep{arjovsky2017wasserstein} and Fisher divergence \citep{song2019generative}. Particularly interesting is the \(\chi^2\) divergence \(\dchi (P_\theta, P_\text{data}) = \Ex{x \sim P_\text{data}}{(P_\theta(x)/P_\text{data}(x) - 1)^2}\).   Indeed, figure \ref{fig:motivation} shows that the \(\chi^2\)-divergence in this case is equal to the squared probability of staying in the data distribution of sequences. We can further penalize out-of-distribution behavior by considering the divergence between mixtures \(\dchi (P_\theta, (P_\text{data} + P_\theta)/2)\), as we do in our practical algorithm.
  However, it is difficult in practice to compute any divergence involving the density of the data, which must be substituted for with an approximation from a discriminator.

  In reinforcement learning, there are several methods which can minimize diverse divergences between the distribution of trajectories from an expert and a learned policy.
  The approaches are non-adversarial, even with unknown expert density \citep{garg2021iq, swamyMomentsMatchingTradeoffs2021, barde2020adversarial, al2023ls}. A key feature of these methods is that they operate on \emph{occupancy measures} instead of joint distributions, a concept which we describe in the next section.
  \section{Method}
  \label{sec:method}
  \subsection{Sequence Modelling as a Markov Decision Process}%
  We consider a sequence modelling problem represented as a Markov decision process (MDP), defined by a tuple ($\mathcal{S}, \mathcal{A}, \mathcal{P}, r, \gamma)$, with state and action spaces $\mathcal{S}, \mathcal{A}$, dynamics $\mathcal{P}(s'|s, a)$, reward function $r(s, a)$, and discount factor $\gamma \in (0, 1)$. In our case, the state space \(\mathcal{S}\) is the set of all sequences (of all lengths) with elements in a finite set \(X\), the vocabulary. The action set \(\mathcal{A}\) is a finite set. For concreteness, we can assume that \(X \subseteq \mathcal{A} \) (i.e. we have an \texttt{insert-token} action for each token), as well as an additional backspace action \texttt{<bkspc>}. In our case, the initial state is given by a special \texttt{<bos>} token, while the dynamics for an \texttt{insert-token} action in a state (sequence) \(s\) leads deterministically to the sequence \(s'\) consisting of \(s\) with the given token appended.
  
  Combined with a policy \(p_\theta(a|s)\), the MDP defines a distribution over (possibly infinite-length) sequences, following the generative process of sampling an action \(a \sim p_\theta(\cdot|s)\), then sampling the next state \(s' \sim \mathcal{P}(s'|s, a)\), etc. Finally, we assume that a special end of sequence token \texttt{<eos>} induces a terminal state: in a state \(s\) with \texttt{<eos>} as the final element, all actions cause a self-transition to \(s\). This probabilistic process reduces exactly to the autoregressive formulation of sequence modelling when the action set is the same as the vocabulary.
  A backspace action in a state \(s\) deterministically transitions to a state \(s'\) with the final token in the sequence \(s\) removed.\footnote{In the case of \(s\) = \texttt{<bos>} and \texttt{<bkspc>} is used, \(s'\) = \texttt{<bos>} also} An example of the states and actions can be seen in figure \ref{fig:masks}.
  The MDP framework formalizes the intuitive picture in figure \ref{fig:motivation}: language modelling can be viewed as the traversal of a tree where the nodes are (partial) sequences.

  A central quantity of interest is the occupancy measure. We denote by \(s_t\) the random variable consisting of the state at time \(t\) under a policy \(p(a|s)\) and the MDP defined above. Then, the occupancy measure $\rho(s,a): \mathcal{S} \times \mathcal{A} \rightarrow [0,1] $ is the (discounted) probability of observing a particular sentence \(s\) at time \(t\) and taking action \(a\) given that sentence:
  \begin{align}
\label{eq:rho-def}
  \rho(s, a) = (1-\gamma) p(a|s)\sum_t \gamma ^{t} P(s_t = s).
  \end{align}
  In other words, the occupancy measure is proportional to the observed frequency of a particular (sentence, next-action) pair occurring, with occurrances discounted in time by a factor of \(\gamma\). In the absence of editing actions, \(\mathcal{A} = X\) and the occupancy measure is a discounted probability over (partial) sequences: for a sequence \(s_n\) of length \(n\), \(\rho_{\text{data}}(s_n, x) = (1-\gamma)\gamma^n P_\text{data}(s')\), where \(s'\) is the sequence obtained by appending \(x\) to \(s\).
  Given editing actions which can reduce the length of a sequence, the occupancy measure becomes more complicated, as the same sequence can occur at multiple time indices.
  For a function \(r\), the expectation with respect to \(\rho\) has the usual meaning: \(\Ex{(s,a) \sim \rho}{r(s,a)} = \sum_{\mathcal{S}, \mathcal{A}}\rho(s,a)r(s,a)\), with sum over the discrete action space and (countably infinite) state space.
  Occupancy measures provide an alternative way of modelling sequences, allowing us to impose a measure over all sequences, even in the presence of editing actions.
  The next section illustrates that we can non-adversarially minimize a large variety of divergences between occupancy measures, compared to only the KL divergence in the typical joint probability formulation.  
\subsection{Minimizing Occupancy Divergences}
\label{sec:minim-occup-diverg}
Our aim is to learn a policy \(p_\theta(a|s)\) which induces an occupancy measure \(p_\theta\) such that it is close to the data occupancy measure \(p_\text{data}\). We define the data occupancy measure by forming the data policy \(p_\text{data}(a|s)\) corresponding to the conditionals \(P_\text{data}(x|s_n)\) and setting the probability of editing actions to zero. It is known that matching occupancy measures implies matching policies: if \(\rho_\theta = \rho_\text{data}\) for a valid occupancy \(\rho_\theta\), then the corresponding \(p_\theta(a|s) = P_\text{data}(a|s)\) \citep{syed2008apprenticeship}. Therefore, it is reasonable to minimize divergences between occupancy measures.
We extend the derivations in \citet{garg2021iq} to the case with infinite-dimensional state space.
We consider distances between occupancy divergences parameterized by the following form:
\begin{align}
  d_\psi(\rho_\theta, \rho_\text{data}) = \sup_{r \in \mathcal{R}} \Ex{(s,a) \sim \rho_\theta}{r(s,a)} - \Ex{(s,a) \sim \rho_\text{data}}{r(s,a)} - \psi(r),
\end{align}
where \(\psi\) is a convex regularizer. The critic \(r\) picks out differences between the occupancies, while if \(\rho_\theta = \rho_\text{data}\), the difference in expectations will be zero for any \(r\).
This family of divergences  includes all \(f\)-divergences such as the KL and JS-divergence, as well as the Wasserstein distance and MMD, as described in the appendix.
The problem can be made tractable by adding an entropy term:
\begin{align}
  \label{eq:1}
  \inf_\theta d_\psi(\rho_\theta, \rho_\text{data}) - \alpha H[\rho_\theta],
\end{align}
with the entropy \(H[\rho_\theta] = -\Ex{(s, a) \sim \log \rho_\theta}{\log \rho_\theta(s, a)}\), and \(\alpha\) a chosen regularization strength.
Substituting in the definition of \(d_\psi\), we obtain the min-max problem \(\inf_{\rho_\theta}\sup_r L(\theta, r) = \inf_{\rho_\theta}\sup_r r \cdot (\rho_\theta - \rho_\text{data}) - \psi(r) - \alpha H\left[\rho_\theta\right]\).
We prove in the appendix that the saddle-point property in \citet{ho2016generative} extends to our infinite-dimensional case, so \(\inf_{\rho_\theta}\sup_r L(\theta, r) = \sup_r \inf_{\rho_\theta} L(\theta, r)\).
We can interpret the outer maximization as finding a critic \citep{lecunTutorialEnergyBasedLearning2006} \(r\) for sequences and actions \(s, a\) such that the model has high values on examples from the dataset and low values on the examples from the learned model.
The inner minimization over \(\theta\) is an entropy-regularized minimization of the KL-divergence between \(\rho_\theta\) and \(r\). 
Approaching this directly by explicitly learning \(r\) and \(\rho_\theta\), would lead to an objective similar to a GAN \citep{goodfellow2020generative}.
This is known to be difficult to train \citep{jabbarSurveyGenerativeAdversarial2020}. 
Instead, we can solve the problem with optimization over a single variable by a transformation of variables.
In the following section, we recover an objective \(\mathcal{J}\) which is equivalent to the objective in \eqref{eq:1}, but only involves optimization over the logits of a policy. For ease of exposition, \(\alpha = 1\) in the next section.
\subsubsection{Reformulating the Occupancy Divergence Minimization Problem}
We first introduce the \(Q\)-function, corresponding to the discounted rewards obtained in state \(s\) by taking action \(a\). Formally, it is the unique fixed point of the soft Bellman operator \(\mathcal{B}_r^\theta\), where \(\mathcal{B}_r^\theta Q(s,a) = r(s,a) + \gamma\Ex{s' \sim \mathcal{P}(s,a)}{V^\theta(s')}\), for the value function \(V^\theta(s) = \Ex{a \sim p_\theta(\cdot | s)}{Q(s,a) - \log p_\theta(a|s)}\). The inverse Bellman operator \(\mathcal{T}^\theta\) is the inverse of this operator, given by \((\mathcal{T}^\theta Q)(s,a) = Q(s,a) - \gamma\Ex{s' \sim \mathcal{P}(s,a)}{V^\theta(s')}\). For a fixed policy \(\theta\), there is a one-to-one correspondence between \(r\) and \(Q\) via the Bellman and inverse Bellman operators (proved in the appendix).
Crucially, for the unique occupancy \(\rho^*\) which solves \(\max_\theta \Ex{s, a \sim \rho_\theta}{r(s,a)} - H[\rho_\theta]\), the optimal policy \(\log p^*(a|s)\) corresponding to \(\rho^*\) is proportional to the corresponding \(Q\)-values \(Q^*\): \(\log p^*(a|s) = Q^*(s,a) - V^{\theta^*}(s) = Q^*(s,a) - \log \sum_{a' \in \mathcal{A}}\exp Q^*(s, a')\). The key idea of the following derivations is that the optimal policy is uniquely determined by the optimal \(Q\)-values, while the reward is determined by the \(Q\)-values. This allows us to optimize solely over \(Q\)-values.
\begin{proposition}
  \label{sec:reform-occup-diverg}
  The following equalities hold for the loss:
  \begin{align*}
 \inf_\theta d_\psi(\rho_\theta, \rho_\text{data}) - H[\rho_\theta] & =  \sup_r \inf_\theta \Ex{\rho_\text{data}}{r(s,a)} - \Ex{\rho_\theta}{r(s,a)} - H[\rho_\theta] - \psi(r),\nonumber\\
 & =  \sup_Q \inf_\theta \Ex{\rho_\text{data}}{\mathcal{T}^\theta Q} - \Ex{\rho_\theta}{(\mathcal{T}^\theta Q)} - H[\rho_\theta] - \psi(\mathcal{T}^\theta Q),\nonumber\\
 & =  \sup_Q \inf_\theta \Ex{\rho_\text{data}}{\mathcal{T}^\theta Q} - (1-\gamma)\Ex{\mathcal{P}_0}{V^\theta(s_0)} - \psi(\mathcal{T}^\theta Q),\nonumber\\
 & =  \sup_Q \inf_\theta \mathbb{E}_{\rho_{\text{data}}}\left[\phi(Q(s,a) - \gamma\Ex{\mathcal{P}}{V^\theta(s')})\right]- (1-\gamma) \mathbb{E}_{\mathcal{P}_{0}}\left[V^{\theta}\left(s_{0}\right)\right],\nonumber\\
 & =  \sup_Q \inf_\theta \mathbb{E}_{\rho_{\text{data}}}\left[\phi(Q(s,a) - \gamma\Ex{\mathcal{P}}{V^\theta(s')})\right]- \mathbb{E}_{\rho}\left[V^\theta(s) - \gamma V^\theta(s')\right],\nonumber\\
 & =  \sup_Q \mathbb{E}_{\rho_{\text{data}}}\left[\phi(Q(s,a) - \gamma\Ex{\mathcal{P}}{V(s')})\right]- \mathbb{E}_{\rho}\left[V(s) - \gamma V(s')\right],\nonumber
  \end{align*}
  where \(\phi\) is concave, \(\mathbb{E}_{\rho_{\text{data}}}\) denotes expectations over sampled states and actions \(s,a\), \(\mathbb{E}_{\mathcal{P}}\) denotes an expectation over successor states \(s'\), and \(\mathbb{E}_{\rho}\) denotes an expectation over sampled states \(s\) and successor states \(s'\), for any occupancy \(\rho\). \(V(s)\) (without \(\theta\)) is given by \(V(s) = \log \sum_{a' \in \mathcal{A}} \exp Q(s,a')\).
\end{proposition}
\begin{proof}
The full proof is given in the appendix. As a sketch, the first equality holds from the previous section. The second is obtained by replacing \(r\) with \(\mathcal{T}^\theta Q\) and verifying that the two optimization problems are equal. The third line is via a telescoping sum argument first described in \citep{kostrikov2019imitation}. In the fourth line we replace \(\psi(r)\) with a simpler regularizer \(\Ex{s,a \sim \rho_\text{data}}{g(r(s,a))}\), where \(g(r) = r - \phi(r)\) if \(r \in \Omega\), and infinity otherwise. The fifth line follows from expanding the telescoping sum in a different way, incorporating samples from any policy. In the final line we parameterize the policy from the \(Q\)-values, setting \(\log p_Q(a|s) = Q(s,a) - \log \sum_{a' \in \mathcal{A}}\exp Q(s, a')\). We then show that the optimization problem over \((Q, p_Q)\) has the same optimum as the optimization over \((Q, \theta)\), so we can optimize solely over \(Q\).
\end{proof}
We relabel \(Q\) as \(\ell_\theta\) to make the connection to logits clearer, resulting in the fully supervised objective over logits \(\ell_\theta\): \(\mathcal{J}(\ell_\theta) = \frac{1}{\alpha}\Ex{\rho_\text{data}}{\phi(\alpha \ell_\theta(a| s) - \alpha \gamma V(s')} -\frac12 \Ex{\rho_\text{data}}{V(s) - \gamma V(s')} -\frac12 \Ex{\rho_\theta}{V(s) - \gamma V(s')}\),
where \((s,a,s') \sim \rho\) corresponds to sampling \(s,a\) from \(\rho\) and \(s'\) from \(\mathcal{P}(\cdot | s,a)\), and \(V(s) = \log \sum_{a' \in \mathcal{A}}\exp \ell_\theta(a'|s)\).
Minimizing this objective is equivalent to \(\min_\theta d_\psi(\rho_\theta, \rho_\text{data}) - \alpha H[\rho_\theta]\),
where \(d_\psi(P, Q) = \sup_{r \in \Omega} \Ex{x \sim P}{\phi(r(x))} - \Ex{x \sim Q}{r(x)}\).
By choosing \(\Omega\) and \(\phi\), we can recover \(f\)-divergences, including KL, JS and \(\chi^2\) divergences, and additionally the Wasserstein and MMD distances. The corresponding choices are given in the appendix.
\section{Practical Occupancy Matching with Autoregressive Models}
In practice, we wish to train a parameterized model \(p_\theta(a|s)\) which can serve as a policy, emitting a probability distribution over the next action given a partially completed sequence \(s\).
A typical choice is a transformer \citep{vaswani2017attention}: with parameters \(\theta\) it gives a distribution over the next token \(x_i\) given the previous tokens \(x_{<i}\), parameterized as unnormalized logits \(\ell_\theta\). Thus the MLE loss, with a complete sequence \(x_{1:N}\), can be written as \(\mathcal{\hat L}_\text{MLE}(\ell_\theta) = \sum_{i=1}^N\ell(x_{i}|x_{<i}) - \log \sum_{x' \in X}\exp \ell(x'|x_{<i})\), or \(\mathcal{\hat L}_\text{MLE}(\ell_\theta) = \sum_{i=1}^N\ell(x_{i}|x_{<i}) - V(x_{<i})\). 

To form an estimator for the loss derived in the previous section, samples from \(\rho_\theta\) are required. We obtain these samples by sampling complete sequences from the policy autoregressively and weighting the partial sequence at time \(t\) by a factor of \(\gamma^t\). We similarly sample sequences from \(\rho_\text{data}\) by sampling complete sequences from \(P_\text{data}\) and weighting. So, for a length-N sequence of states \(s_{1:N}\) from a dataset, corresponding actions \(a_{1:N}\) and a generated length-\(M\) sequence \(u_{1:M}\) of states from the model, we can form an estimator for the loss from the previous section:
\begin{align}
  \label{eq:sm-objective}
  &\mathcal{\hat J}(\ell_\theta) =  \underbrace{\sum_i^N \gamma^i \frac{1}{\alpha} \phi\left(\alpha\ell_\theta(a_i| s_i) -\gamma \alpha V(s_{i+1})\right)}_{\text{Penalized difference from action logit to next state value}} - \underbrace{\sum_i^N \frac{\gamma^i}{2} \left[ V(s_i) - \gamma V(s_{i+1})\right]}_{\text{State, next state value difference under data}} \nonumber \\
                                  & - \underbrace{\sum_i^M \frac{\gamma^i}{2} \left[ V(u_i) - \gamma V(u_{i+1})\right]}_{\text{State, next state value difference under model}}+\underbrace{\frac{\gamma^N}{\alpha(1-\gamma)}\phi\left(\alpha(1-\gamma)V(s_N)\right) - \frac{\gamma^N}{2}V(s_N) - \frac{\gamma^M}{2}V(u_M)}_{\text{Loss from completed sequences}},
\end{align}
and \(V(s) = \log \sum_{a' \in \mathcal{A}}\exp \ell_\theta(a' | s)\). 
The separate treatment of the \texttt{<eos>} tokens arises from taking the sum over the infinite timesteps in \eqref{eq:rho-def} in the terminal states. As shown in the appendix, the estimator is unbiased and consistent for finite \(\phi\).
It has also been shown \citep{al2023ls} that minimizing the mixture divergence \(\dchi (\rho_\text{data}, (\rho_\text{data} + \rho_\theta)/2)\) is more effective than simply minimizing the \(\chi^2\)-divergence between model and data. This can be implemented by calculating the loss for the \(\chi^2\)-divergence (with \(\phi(x) = x - \tfrac{1}{4}x^2\)) and adding an additional regularization term \(\Ex{s,a, s' \sim \rho_\theta}{(\alpha\ell_\theta(a| s) -\gamma \alpha V(s'))^2}\).
 We show in the appendix that with no backspace actions, \(\lim_{\alpha \to 0}\mathcal{J}_{\ell_\theta} = \operatorname{D_\text{KL}}(\rho_\text{data}\|\rho_\theta)\) reduces to a \(\gamma\)-reweighted MLE objective.
\subsection{Efficient Training}
\begin{wrapfigure}{r}{0.45\textwidth}
  \vspace{-0.5cm}
  \centering
  \includegraphics[width=0.45\textwidth]{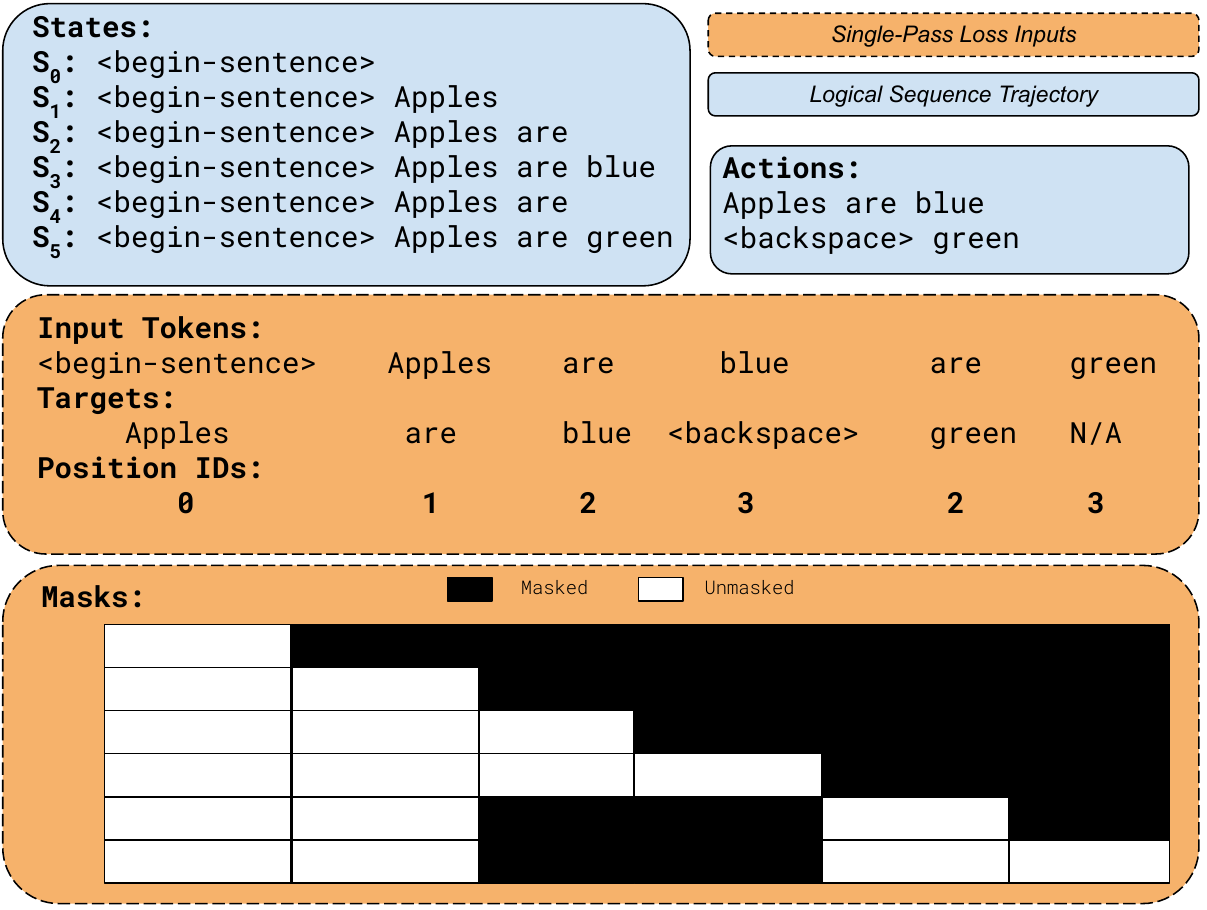}
  \caption{Transforming states and actions to single-pass inputs for parallel training.}
  \label{fig:masks}
  \vspace{-0.3cm}
\end{wrapfigure}
Editing actions which can delete previous parts of the input are challenging to implement while retaining the fast training of transformer-based autoregressive models. For instance, the sequence of actions \texttt{[a; b; <bkspc>]} cannot be fed directly into a policy network \(p_\theta(a|s)\), since it contains actions, not states. The sequence \texttt{[a; b; <bkspc>]} is not a valid state: the corresponding state is \texttt{[<bos> a]}. In order to convert this into a form where we can compute the relevant logits using masked attention, we must pre-process the sequence of actions into corresponding inputs, labels, masks and position IDs using algorithm A in the appendix. The preprocessing is illustrated in figure~\ref{fig:masks}. On the other hand, generation with backspace actions is straightforward: we already keep previous key-value cached values for generation with transformers. When \texttt{<bkspc>} is sampled, we simply roll back the state of the key-value cache and position id with negligible overhead. Additionally, the loss requires sampling from the model during training, which is typically slow. However, the sequences do not need to be exactly sampled from the current policy. Since any policy can be used, sequences generated from the policy at previous training steps are stored in a replay buffer \citep{mnihPlayingAtariDeep2013} and reused. We give an empirical analysis of the overhead when using SequenceMatch in the appendix.
\subsection{Augmenting Expert Sequences with Backspace}
To provide the policy with examples of how the \texttt{<bkspc>} action should be used, we augment the data sequences as follows: with (small) probability \(\eta\), we replace a sequence \(\ldots, x_{i-1}, x_i, x_{i+1}, \ldots\)  with  \(x_{i-1}, x_i, x_{i}',\) \texttt{<bkspc>}\(, x_{i+1}, \ldots\), where \(x_{i}'\) is chosen randomly from the vocabulary. However, we keep the action at position \(i\) as \(x_{i+1}\), with the result that the overall MDP is augmented with a stochastic dynamics: with probability \(\eta\) a random token is inserted, instead of the chosen action. This is also applied to sequences which exceed the context length: the action is kept the same but the next token is forced to be the \texttt{<eos>} token. This introduces bias, as the policy learns to match the data distribution under a slightly different MDP than generation takes place in. In practice however, it leads to improved performance compared to the policy learning with no examples of \texttt{<bkspc>}.
\begin{algorithm2e}\small
  \caption{\small Training an autoregressive model against a SequenceMatch objective}\label{alg:sequencematch}
  \SetAlgoLined{}
  \DontPrintSemicolon{}
    \SetKwInOut{Input}{Input}\SetKwInOut{Output}{output}
    \Input{Dataset \(\mathcal{D}\) of data sequences, gradient-based optimizer \texttt{step}, number of train steps \(n_\text{train}\), parameters \(\alpha, \beta, \gamma, \phi\), sampling interval \(k_\text{sample}\), fixed context length \(T\)\;}
    Add noise and process data sequences with algorithm A to form new effective trajectories\\
    Initialize buffer \(\mathcal{B}\) of model sequences; Initialize autoregressive policy \(\ell_\theta(\cdot| s)\)\\
    \For{\(k\) in \(n_\text{train}\)}{
      \If{\(k \mod k_\text{sample} = 0\)}{
        Populate \(\mathcal{B}\) with trajectories \(\mathcal{T} \sim \ell_\theta\); Process added sequences with algorithm A \\

        Remove oldest model sequences from \(\mathcal{B}\)
      }      
      Sample dataset trajectories \(\mathcal{T}_\text{data} \sim \mathcal{D}\) and model trajectories \(\mathcal{T}_\text{model} \sim \mathcal{B}\)\\

      Compute \(g = \nabla_\theta {\hat {\cal J}}(\ell_\theta, \alpha, \gamma, \mathcal{T}_\text{data}, \mathcal{T}_\text{model})\) and update $\theta$ via \texttt{step} using gradient \(g\)\\

    }\
  \end{algorithm2e}   
\section{Related Work}
\paragraph{Text Degeneration in Large Language Models\\}In natural language processing the phenomenon of \emph{text degeneration} can occur, when a language model produces repetitive or nonsensical sequences~\citep{holtzman2019curious}.
Many explanations have been proposed to explain this phenomenon~\citep{fu2021theoretical, welleck2019neural}; a leading theory is that the large vocabulary size induces the model to over-estimate the probability of OOD tokens.
Once these tokens are sampled, the model's context is now out-of-distribution.
Measures to mitigate this problem include top-$k$ sampling~\citep{fan2018hierarchical}, restricting generations to the $k$ most likely tokens, and top-$p$ sampling \citep{holtzman2019curious}, an adaptive variant of top-$k$ sampling.
In addition, alternative training measures have been proposed to reduce the probability of the model producing OOD tokens.
Unlikelihood training \citep{welleck2019neural} is discussed in detail in the appendix, while contrastive methods have also been proposed \citep{jiang2022simple}, which force the representations of repetitive text to be far from the representations of correct text.
\paragraph{Matching Divergences in Imitation Learning\\}In the imitation learning\citep{ng2000algorithms} subfield of RL, the objective is to learn a policy giving a distribution over actions in each state, such that the distribution over trajectories is close to distribution of provided expert trajectories.
A simple approach is behavioral cloning \citep{esmaili1995behavioural}, which maximizes the likelihood of the expert's chosen actions, on average over the states that the expert is in.
However, it has been shown \citep{rossReductionImitationLearning2011} that this approach results in a \emph{compounding error} problem, where the further the trained model gets from the typical expert states, the worse the model performs, incurring increasing error.
  \citet{ho2016generative} show that minimizing the occupancy divergence between the expert and a learned policy could be written as a two-variable saddle-point problem.
This more sophisticated method can take the dynamics of the problem into account, learning a policy which can return to the typical expert states if it erroneously leaves them.
  In \citet{garg2021iq}, this was further developed via a change of variables to only require a non-adversarial optimization over one variable.
  We can view our approach as a specialization of the IQ-Learn algorithm in \citet{garg2021iq} to autoregressive sequence models.
  \section{Experiments}
  In this section we demonstrate that SequenceMatch can be used with large, state-of-the-art models, and that it can be useful for downstream applications. The experiments also allow some insight into the relative importance of the different components of SequenceMatch, namely the \texttt{<bkspc>} token and the alternative loss. The first experiment shows that SequenceMatch can improve accuracy on a downstream task, and that it can detect OOD states. The second experiment shows that SequenceMatch training can generate sequences with higher similarity to the data compared to a variety of baselines. In the appendix, section \ref{sec:addit-exper}, we describe three additional experiments, on translation, multiplication, and prime factorization. 
  In all experiments, we finetune Llama2-7b \citep{touvron2023llama}, using quantized low-rank adapters \citep{dettmers2023qlora}. In addition to the adapters, a row is added to the unembedding layer for the \texttt{<bkspc>} token, and the unembedding layer is trained.
  \subsection{Arithmetic}
  We first examine the potential for SequenceMatch to improve the performance on downstream tasks. The dataset is the arithmetic add-or-sub-in-base sub-task of the math-dataset \citep{saxton2018analysing}, consisting of questions such as \texttt{In base 7, what is -1240 - -4156?}.
  We compare a maximum-likelihood model, a behavioral cloning model, and a SequenceMatch model, with varying levels and types of noise. `Random noise' is generated by sampling a token at random from the vocabulary, and hence is not very likely to be a common mistake made when solving the problem. `Ground-truth noise' is generated by sampling a token from the set \(\{0, \ldots, b-1\}\), where \(b\) is the base in the question. The latter type of noise is expected to generate sequences that are far closer to the type of inaccuracies that a model (or human) might make while solving the problem. However, the random noise is generic to all tasks, while the ground-truth noise must be constructed for a given task manually. Both types of noise are only added to the solution digits, not the question digits. We use a small dataset of only 5,000 questions, to demonstrate the improved generalization abilities of SequenceMatch. The prompts for the SequenceMatch generations are taken from the training dataset and truncated at the end of the question. The accuracy is computed over a held-out test set of 200 questions, and error bars obtained by training two models with different random seed.
  \begin{figure*}[h]
  \centering
  \includegraphics[width=1.0\textwidth]{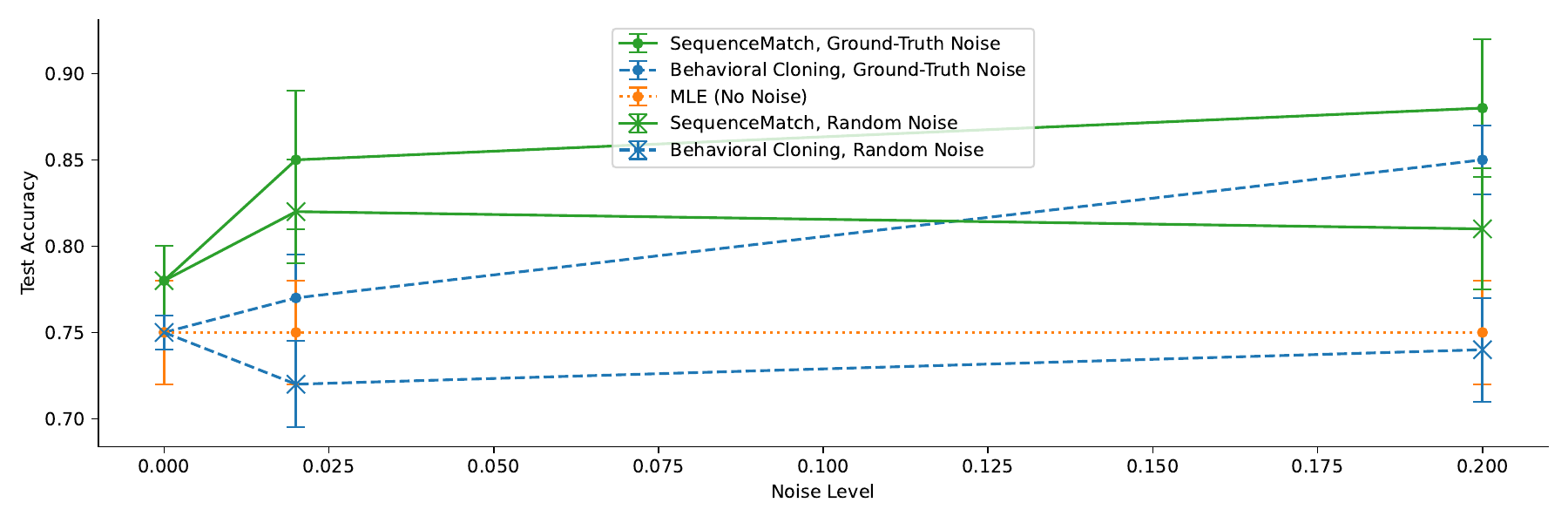}
  \caption{Accuracy on the arithmetic task against noise level (frequency with which noise tokens are added), for ground-truth noise consisting of digits and random noise consisting of random tokens. The ground-truth noise improves accuracy for the behavioral cloning and SequenceMatch models. The random noise does not improve performance for the behavioral cloning model, but does somewhat for the SequenceMatch model, likely helping the model to learn the dynamics of \texttt{<bkspc>}.}
    \label{fig:arithmetic-results}
  \end{figure*}
\vspace{-0.5cm}\subsubsection{Results}
The results are shown in figure \ref{fig:arithmetic-results}. It is clear that the models trained under the SequenceMatch loss outperform both the behavioral cloning (BC) and MLE baselines. With zero added noise, the improvement is small, as the model is not able to learn the backspace dynamics with no demonstrations from the expert. However, with a small amount of added noise, the SequenceMatch model has demonstrations of the use of \texttt{<bkspc>} and can start learning the dynamics through sampling. The BC model improves performance at higher levels of ground-truth noise, as the high level of noise acts as strong data augmentation. Indeed, this improved performance seems roughly proportional to the amount of added noise, as expected. Similarly, while random noise is useful to the SequenceMatch model due to the dynamics information, it does not help the BC model.

  Qualitatively, the SequenceMatch-trained model is able to detect when it is out-of-distribution very accurately. In table \ref{tab:arithmetic-generations} the best-performing SequenceMatch model is prompted with partial solutions which are incorrect (the first four in the test set with added random digit tokens). In each case it is able to detect the OOD case and return closely to the correct answer.
  \begin{table}
    \vspace{-1cm}
    \begin{tabularx}{1\textwidth}{X@{}@{\hspace{0.1em}}X@{}X@{}@{\hspace{0.1em}}p{6em}}
\toprule
      \small \textbf{Ground Truth QA} & \small \textbf{Prompt with Mistake} & \small \textbf{Completion Actions} & \small \textbf{Final State}\\
\midrule
      \tiny \texttt{In base 2, what is -11101111011001100 + 10100? Solution: -11101111010111000} &     \tiny \texttt{In base 2, what is -11101111011001100 + 10100? Solution: -1011} &     \tiny \texttt{<bkspc><bkspc><bkspc>
 1101111<bkspc>1011000010 <eos>} &
                             \tiny \texttt{-11101111011000010} \\
\midrule
      \tiny \texttt{In base 8, what is 4354 + 33? Solution: 4407}    & \tiny \texttt{In base 8, what is 4354 + 33? Solution: 5} & \tiny \texttt{<bkspc>4417<eos>}    & \tiny \texttt{4417} \\
\midrule      
      \tiny \texttt{In base 8, what is -4 + -576122? Solution: -576126}    & \tiny \texttt{In base 8, what is -4 + -576122? Solution: -374} & \tiny \texttt{<bkspc><bkspc><bkspc>
                                                                                                                                                  576126<eos>}    & \tiny \texttt{-576126} \\
\midrule      
      \tiny \texttt{In base 5, what is 10 - 3311121? Solution: -3311111}    & \tiny \texttt{In base 5, what is 10 - 3311121? Solution: -31} & \tiny \texttt{<bkspc>31<bkspc>11111<eos>}    & \tiny \texttt{-3311111} \\
\bottomrule
    \end{tabularx}
    \caption{\small Mistake-conditioned completions for the arithmetic task. We add a random set of digit tokens to the prompt and generate from the SequenceMatch-trained model. The SequenceMatch model correctly deletes the initial tokens in all cases and eventually generates the correct solution in three of four cases.}
\label{tab:arithmetic-generations}
\end{table}
\vspace{-0.2cm}\subsection{Text Generation}
We use the same model and architecture as the previous section. Sequences are drawn from the openwebtext dataset\footnote{https://github.com/jcpeterson/openwebtext}, an open-sourced dataset similar to the training set for GPT-2 \citep{radfordlanguage}, with a 1024-length context window, truncating sequences that are longer. The model-generated trajectories are generated from examples from the training dataset, with a prompt length randomly chosen with a maximum of 256. The generated sequences have a max length of 512 (although they may terminate earlier).
We compare a SequenceMatch-trained model against several baselines: a model trained against the typical MLE objective, and a behavioral cloning model trained with injected noise and \texttt{<bkspc>} labels. We also test \texttt{MLE + C.S.}, which is MLE with contrastive sampling \citep{jiang2022simple}. Finally, \texttt{MLE + ULK} is maximum-likelihood with the unlikelihood token loss \citep{welleck2019neural}. We train for 5,000 gradient steps. The SequenceMatch parameters are set to \(\alpha = 0.01\), \(\eta=0.001\) and \(\gamma = 0.998\). Our main metric for quality of generations is the MAUVE score \citep{pillutla2022mauve}, a non-parametric method for evaluating the quality of a generative model. The MAUVE score is formed by taking a low-dimensional PCA of an embedding of the generated sequences. The score is a mixture of forward and reverse KLs between data and model-generated sequences, between zero and one (higher is better). Additionally we report the n-gram entropy and the diversity metric \citep{jiang2022simple}, given by \(\prod_{n=2}^{4}\left(1.0-\frac{\text { rep }-\mathrm{n}}{100}\right)\), where \(\text {rep-n}=100 \times\left[1.0-\frac{\mid \text { unique } \text { n-grams }(\hat{\boldsymbol{x}}) \mid}{\mid \operatorname{total} \text { n-grams }(\hat{\boldsymbol{x}}) \mid}\right]\) for a generation \(\hat{\boldsymbol{x}}\).
\subsubsection{Results}
\begin{table}[]
  \small
\begin{tabular}{llllll}
\hline
  Model & MLE & MLE + C.S & MLE + ULK     & MLE + \texttt{<bkspc>} & SequenceMatch              \\ \hline
  \multicolumn{1}{l|}{MAUVE (\(\boldsymbol{\uparrow}\))}                 &  0.85 \(\pm\) 0.03             & 0.86 \(\pm\) 0.03              & 0.89 \(\pm\) 0.02         & \multicolumn{1}{l|}{0.84 \(\pm\) 0.02}               & \textbf{0.91 \(\pm\) 0.02} \\
\multicolumn{1}{l|}{n-gram \(\mathbb{H}\) (\(\boldsymbol{\uparrow}\))} & 4.57 \(\pm \) 0.02               & 4.43 \(\pm\) 0.02               & 4.57 \(\pm\)0.01          & \multicolumn{1}{l|}{4.59 \(\pm\) 0.01}               & \textbf{4.60 \(\pm\) 0.01} \\
\multicolumn{1}{l|}{Diversity (\(\boldsymbol{\uparrow}\))}             & 0.56 \(\pm \) 0.02              & 0.35 \(\pm\) 0.03              & \textbf{0.57 \(\pm\) 0.01} & \multicolumn{1}{l|}{0.56 \(\pm\) 0.01}              & 0.56 \(\pm\) 0.01          \\ \hline
\multicolumn{1}{l|}{Perplexity (\(\boldsymbol{\downarrow}\))}          & \textbf{6.99 \(\pm\) 0.02}      & N/A                               & 7.10 \(\pm\) 0.02            & \multicolumn{1}{l|}{7.02 \(\pm\) 0.02}               & 7.13 \(\pm\) 0.03          \\ \hline
\end{tabular}
\caption{\small A Llama-2-7b model fine-tuned on the openwebtext dataset with different training and sampling regimes. Error bars are over two draws of 1000 evaluation samples. C.S. and ULK are contrastive sampling and unlikelihood loss training, respectively. The SequenceMatch model achieves the highest MAUVE score and n-gram entropy, with diversity very close to the best value from the MLE + unlikelihood training.\vspace{-0.5cm}}
\label{tab:owt-results}
  \end{table}
  Table \ref{tab:owt-results} shows that the SequenceMatch-trained model is able to achieve higher MAUVE score compared to the baselines. It also improves with respect to the n-gram entropy. On the diversity metric, all models are similar, except the contrastive sampling model. The SequenceMatch-trained model is outperformed on the perplexity metric by the BC and MLE-trained methods. This is expected, as the training objective for BC and MLE is exactly the log-perplexity. However, on the MAUVE score, only unlikelihood and SequenceMatch offer a substantial improvement to MLE. Of course, unlikelihood relies on a heuristic to penalize repetitions: a heuristic not appropriate in e.g. arithmetic. 

  \vspace{-0.25cm}\section{Conclusion}
  We address the compounding error problem in autoregressive sequence generation by formulating the problem in the imitation learning framework, deriving a general non-adversarial objective for minimizing divergences between occupancy measures induced by a learned model and the data distribution. We develop a novel masking scheme to train a transformer-based autoregressive model with a backspace action with small overhead vs MLE, further reducing compounding error by allowing backtracking. Empirically, the SequenceMatch objective leads to improvements over MLE at text generation and arithmetic.
  Future work could investigate how qualities of generations change with choice of divergence, or find methods to reduce the overhead of the SequenceMatch objective. 
  \section{Acknowledgements}
  This research was supported by funding from the following: Stanford HAI, NSF(\#1651565), ARO (W911NF-21-1-0125), ONR (N00014-23-1-2159), and the CZ Biohub. We thank Anuj Nagpal, Div Garg and Andy Shih for valuable discussions and feedback on this research direction. 

\bibliography{main}

\begin{thebibliography}{35}
\providecommand{\natexlab}[1]{#1}
\providecommand{\url}[1]{\texttt{#1}}
\expandafter\ifx\csname urlstyle\endcsname\relax
  \providecommand{\doi}[1]{doi: #1}\else
  \providecommand{\doi}{doi: \begingroup \urlstyle{rm}\Url}\fi

\bibitem[{Al-Hafez} et~al.(2023){Al-Hafez}, Tateo, Arenz, Zhao, and
  Peters]{al2023ls}
Firas {Al-Hafez}, Davide Tateo, Oleg Arenz, Guoping Zhao, and Jan Peters.
\newblock {{LS-IQ}}: {{Implicit}} reward regularization for inverse
  reinforcement learning.
\newblock In \emph{The Eleventh International Conference on Learning
  Representations}, 2023.

\bibitem[Arjovsky et~al.(2017)Arjovsky, Chintala, and
  Bottou]{arjovsky2017wasserstein}
Martin Arjovsky, Soumith Chintala, and L{\'e}on Bottou.
\newblock Wasserstein generative adversarial networks.
\newblock In \emph{International Conference on Machine Learning}, pp.\
  214--223. {PMLR}, 2017.

\bibitem[Arora et~al.(2022)Arora, El~Asri, Bahuleyan, and
  Cheung]{aroraWhyExposureBias2022}
Kushal Arora, Layla El~Asri, Hareesh Bahuleyan, and Jackie Chi~Kit Cheung.
\newblock Why {{Exposure Bias Matters}}: {{An Imitation Learning Perspective}}
  of {{Error Accumulation}} in {{Language Generation}}.
\newblock In \emph{Findings of the {{Association}} for {{Computational
  Linguistics}}: {{ACL}} 2022}, pp.\  700--710, 2022.

\bibitem[Barde et~al.(2020)Barde, Roy, Jeon, Pineau, Pal, and
  Nowrouzezahrai]{barde2020adversarial}
Paul Barde, Julien Roy, Wonseok Jeon, Joelle Pineau, Chris Pal, and Derek
  Nowrouzezahrai.
\newblock Adversarial soft advantage fitting: {{Imitation}} learning without
  policy optimization.
\newblock In \emph{Advances in {{Neural Information Processing Systems}}},
  volume~33, pp.\  12334--12344, 2020.

\bibitem[Brown et~al.(2020)Brown, Mann, Ryder, Subbiah, Kaplan, Dhariwal,
  Neelakantan, Shyam, Sastry, Askell, Agarwal, {Herbert-Voss}, Krueger,
  Henighan, Child, Ramesh, Ziegler, Wu, Winter, Hesse, Chen, Sigler, Litwin,
  Gray, Chess, Clark, Berner, McCandlish, Radford, Sutskever, and
  Amodei]{brownLanguageModelsAre2020}
Tom~B. Brown, Benjamin Mann, Nick Ryder, Melanie Subbiah, Jared Kaplan,
  Prafulla Dhariwal, Arvind Neelakantan, Pranav Shyam, Girish Sastry, Amanda
  Askell, Sandhini Agarwal, Ariel {Herbert-Voss}, Gretchen Krueger, Tom
  Henighan, Rewon Child, Aditya Ramesh, Daniel~M. Ziegler, Jeffrey Wu, Clemens
  Winter, Christopher Hesse, Mark Chen, Eric Sigler, Mateusz Litwin, Scott
  Gray, Benjamin Chess, Jack Clark, Christopher Berner, Sam McCandlish, Alec
  Radford, Ilya Sutskever, and Dario Amodei.
\newblock Language {{Models}} are {{Few-Shot Learners}}.
\newblock \emph{arXiv:2005.14165 [cs]}, July 2020.

\bibitem[Dettmers et~al.(2023)Dettmers, Pagnoni, Holtzman, and
  Zettlemoyer]{dettmers2023qlora}
Tim Dettmers, Artidoro Pagnoni, Ari Holtzman, and Luke Zettlemoyer.
\newblock Qlora: Efficient finetuning of quantized llms.
\newblock \emph{arXiv preprint arXiv:2305.14314}, 2023.

\bibitem[Esmaili et~al.(1995)Esmaili, Sammut, and
  Shirazi]{esmaili1995behavioural}
Nasser Esmaili, Claude Sammut, and {\relax GM}~Shirazi.
\newblock Behavioural cloning in control of a dynamic system.
\newblock In \emph{1995 {{IEEE}} International Conference on Systems, Man and
  Cybernetics. {{Intelligent}} Systems for the 21st Century}, volume~3, pp.\
  2904--2909. {IEEE}, 1995.

\bibitem[Fan et~al.(2018)Fan, Lewis, and Dauphin]{fan2018hierarchical}
Angela Fan, Mike Lewis, and Yann Dauphin.
\newblock Hierarchical neural story generation.
\newblock In \emph{Proceedings of the 56th Annual Meeting of the Association
  for Computational Linguistics (Volume 1: {{Long}} Papers)}, pp.\  889--898,
  2018.

\bibitem[Fu et~al.(2021)Fu, Lam, So, and Shi]{fu2021theoretical}
Zihao Fu, Wai Lam, Anthony Man-Cho So, and Bei Shi.
\newblock A theoretical analysis of the repetition problem in text generation.
\newblock In \emph{Proceedings of the {{AAAI}} Conference on Artificial
  Intelligence}, volume~35, pp.\  12848--12856, 2021.

\bibitem[Garg et~al.(2021)Garg, Chakraborty, Cundy, Song, and
  Ermon]{garg2021iq}
Divyansh Garg, Shuvam Chakraborty, Chris Cundy, Jiaming Song, and Stefano
  Ermon.
\newblock {{IQ-Learn}}: {{Inverse}} soft-{{Q}} learning for imitation.
\newblock In \emph{{{NeurIPS}}}, 2021.

\bibitem[Goodfellow et~al.(2014)Goodfellow, {Pouget-Abadie}, Mirza, Xu,
  {Warde-Farley}, Ozair, Courville, and Bengio]{goodfellow2020generative}
Ian Goodfellow, Jean {Pouget-Abadie}, Mehdi Mirza, Bing Xu, David
  {Warde-Farley}, Sherjil Ozair, Aaron Courville, and Yoshua Bengio.
\newblock Generative adversarial networks.
\newblock \emph{Neural Information Processing Systems (NeurIPS)}, 2014.

\bibitem[Hendrycks \& Gimpel(2016)Hendrycks and Gimpel]{hendrycks2016gaussian}
Dan Hendrycks and Kevin Gimpel.
\newblock Gaussian error linear units (gelus).
\newblock \emph{arXiv preprint arXiv:1606.08415}, 2016.

\bibitem[Ho \& Ermon(2016)Ho and Ermon]{ho2016generative}
Jonathan Ho and Stefano Ermon.
\newblock Generative adversarial imitation learning.
\newblock \emph{Advances in neural information processing systems}, 29, 2016.

\bibitem[Holtzman et~al.(2019)Holtzman, Buys, Du, Forbes, and
  Choi]{holtzman2019curious}
Ari Holtzman, Jan Buys, Li~Du, Maxwell Forbes, and Yejin Choi.
\newblock The curious case of neural text degeneration.
\newblock In \emph{International Conference on Learning Representations}, 2019.

\bibitem[Jabbar et~al.(2020)Jabbar, Li, and
  Omar]{jabbarSurveyGenerativeAdversarial2020}
Abdul Jabbar, Xi~Li, and Bourahla Omar.
\newblock A {{Survey}} on {{Generative Adversarial Networks}}: {{Variants}},
  {{Applications}}, and {{Training}}.
\newblock \emph{arXiv:2006.05132 [cs]}, June 2020.

\bibitem[Jiang et~al.(2022)Jiang, Zhang, Vakulenko, and {de
  Rijke}]{jiang2022simple}
Shaojie Jiang, Ruqing Zhang, Svitlana Vakulenko, and Maarten {de Rijke}.
\newblock A simple contrastive learning objective for alleviating neural text
  degeneration.
\newblock \emph{arXiv preprint arXiv:2205.02517}, 2022.

\bibitem[Kostrikov et~al.(2019)Kostrikov, Nachum, and
  Tompson]{kostrikov2019imitation}
Ilya Kostrikov, Ofir Nachum, and Jonathan Tompson.
\newblock Imitation learning via off-policy distribution matching.
\newblock In \emph{International Conference on Learning Representations}, 2019.

\bibitem[Kwon et~al.(2021)Kwon, Kim, Mont{\'u}far, and
  Yang]{kwonTrainingWassersteinGANs2021}
Dohyun Kwon, Yeoneung Kim, Guido Mont{\'u}far, and Insoon Yang.
\newblock Training {{Wasserstein GANs}} without gradient penalties.
\newblock \emph{arXiv:2110.14150 [cs, math]}, October 2021.

\bibitem[LeCun et~al.(2006)LeCun, Chopra, Hadsell, Ranzato, and
  Huang]{lecunTutorialEnergyBasedLearning2006}
Yann LeCun, Sumit Chopra, Raia Hadsell, Marc'Aurelio Ranzato, and Fu~Jie Huang.
\newblock A {{Tutorial}} on {{Energy-Based Learning}}, 2006.

\bibitem[Mnih et~al.(2013)Mnih, Kavukcuoglu, Silver, Graves, Antonoglou,
  Wierstra, and Riedmiller]{mnihPlayingAtariDeep2013}
Volodymyr Mnih, Koray Kavukcuoglu, David Silver, Alex Graves, Ioannis
  Antonoglou, Daan Wierstra, and Martin Riedmiller.
\newblock Playing {{Atari}} with {{Deep Reinforcement Learning}}, 2013.

\bibitem[Ng \& Russell(2000)Ng and Russell]{ng2000algorithms}
Andrew~Y Ng and Stuart Russell.
\newblock Algorithms for inverse reinforcement learning.
\newblock In \emph{In Proc. 17th International Conf. on Machine Learning}.
  {Citeseer}, 2000.

\bibitem[Pillutla et~al.(2022)Pillutla, Liu, Thickstun, Welleck, Swayamdipta,
  Zellers, Oh, Choi, and Harchaoui]{pillutla2022mauve}
Krishna Pillutla, Lang Liu, John Thickstun, Sean Welleck, Swabha Swayamdipta,
  Rowan Zellers, Sewoong Oh, Yejin Choi, and Zaid Harchaoui.
\newblock {{MAUVE}} scores for generative models: {{Theory}} and practice.
\newblock \emph{arXiv preprint arXiv:2212.14578}, 2022.

\bibitem[Radford et~al.(2018)Radford, Wu, Child, Luan, Amodei, and
  Sutskever]{radfordlanguage}
Alec Radford, Jeffrey Wu, Rewon Child, David Luan, Dario Amodei, and Ilya
  Sutskever.
\newblock Language models are unsupervised multitask learners, 2018.

\bibitem[Rizzolo \& Su(2007)Rizzolo and Su]{rizzolo2007fixed}
Douglas Rizzolo and Francis~Edward Su.
\newblock A fixed point theorem for the infinite-dimensional simplex.
\newblock \emph{Journal of mathematical analysis and applications},
  332\penalty0 (2):\penalty0 1063--1070, 2007.

\bibitem[Ross et~al.(2011)Ross, Gordon, and
  Bagnell]{rossReductionImitationLearning2011}
St{\'e}phane Ross, Geoffrey Gordon, and Drew Bagnell.
\newblock A reduction of imitation learning and structured prediction to
  no-regret online learning.
\newblock In \emph{{{AISTATS}}}, pp.\  627--635, 2011.

\bibitem[Saxton et~al.(2018)Saxton, Grefenstette, Hill, and
  Kohli]{saxton2018analysing}
David Saxton, Edward Grefenstette, Felix Hill, and Pushmeet Kohli.
\newblock Analysing mathematical reasoning abilities of neural models.
\newblock In \emph{International Conference on Learning Representations}, 2018.

\bibitem[Shi et~al.(2018)Shi, Chen, Qiu, and
  Huang]{shiDiverseTextGeneration2018}
Zhan Shi, Xinchi Chen, Xipeng Qiu, and Xuanjing Huang.
\newblock Toward {{Diverse Text Generation}} with {{Inverse Reinforcement
  Learning}}.
\newblock \emph{arXiv:1804.11258 [cs, stat]}, June 2018.

\bibitem[Sion(1958)]{sion1958general}
Maurice Sion.
\newblock On general minimax theorems.
\newblock \emph{Pacific Journal of Mathematics}, 8\penalty0 (4):\penalty0
  171--176, 1958.

\bibitem[Song \& Ermon(2019)Song and Ermon]{song2019generative}
Yang Song and Stefano Ermon.
\newblock Generative modeling by estimating gradients of the data distribution.
\newblock In \emph{Advances in {{Neural Information Processing Systems}}},
  volume~32, 2019.

\bibitem[Swamy et~al.(2021)Swamy, Choudhury, Wu, and
  Bagnell]{swamyMomentsMatchingTradeoffs2021}
Gokul Swamy, Sanjiban Choudhury, Zhiwei~Steven Wu, and J.~Andrew Bagnell.
\newblock Of {{Moments}} and {{Matching}}: {{Trade-offs}} and {{Treatments}} in
  {{Imitation Learning}}.
\newblock \emph{arXiv:2103.03236 [cs, stat]}, March 2021.

\bibitem[Syed et~al.(2008)Syed, Bowling, and Schapire]{syed2008apprenticeship}
Umar Syed, Michael Bowling, and Robert~E Schapire.
\newblock Apprenticeship learning using linear programming.
\newblock In \emph{Proceedings of the 25th International Conference on
  {{Machine}} Learning}, pp.\  1032--1039, 2008.

\bibitem[Tang(2020)]{tangLessonsLearnedTraining2020}
Shichang Tang.
\newblock Lessons {{Learned}} from the {{Training}} of {{GANs}} on {{Artificial
  Datasets}}.
\newblock \emph{arXiv:2007.06418 [cs, stat]}, July 2020.

\bibitem[Touvron et~al.(2023)Touvron, Martin, Stone, Albert, Almahairi, Babaei,
  Bashlykov, Batra, Bhargava, Bhosale, et~al.]{touvron2023llama}
Hugo Touvron, Louis Martin, Kevin Stone, Peter Albert, Amjad Almahairi, Yasmine
  Babaei, Nikolay Bashlykov, Soumya Batra, Prajjwal Bhargava, Shruti Bhosale,
  et~al.
\newblock Llama 2: Open foundation and fine-tuned chat models.
\newblock \emph{arXiv preprint arXiv:2307.09288}, 2023.

\bibitem[Vaswani et~al.(2017)Vaswani, Shazeer, Parmar, Uszkoreit, Jones, Gomez,
  Kaiser, and Polosukhin]{vaswani2017attention}
Ashish Vaswani, Noam Shazeer, Niki Parmar, Jakob Uszkoreit, Llion Jones,
  Aidan~N Gomez, Lukasz Kaiser, and Illia Polosukhin.
\newblock Attention is all you need.
\newblock In \emph{Advances in Neural Information Processing Systems},
  volume~30, 2017.

\bibitem[Welleck et~al.(2019)Welleck, Kulikov, Roller, Dinan, Cho, and
  Weston]{welleck2019neural}
Sean Welleck, Ilia Kulikov, Stephen Roller, Emily Dinan, Kyunghyun Cho, and
  Jason Weston.
\newblock Neural text generation with unlikelihood training.
\newblock In \emph{International Conference on Learning Representations}, 2019.

\end{thebibliography}
\bibliographystyle{iclr2024_conference}

\newpage
\appendix
\onecolumn
\section{Algorithm A}
\begin{algorithm2e}[ht]
  \caption{Algorithm A: Pseudocode for converting action sequences to masked inputs}\label{alg:sequencematch}
  \SetAlgoLined{}
  \DontPrintSemicolon{}
    \SetKwInOut{Input}{Input}\SetKwInOut{Output}{output}
    \Input{Sequence of action inputs \(a_{1:L}\)}
    \Output{Sequence of labels, inputs, masks, position ids \(y \in |V|^L, x \in |V|^{L}, m \in \{0, 1\}^{L\times L}, p \in [L]^L\)}
    Initialize \(y, m, p\) to zero. \\
    Initialize \(x\) to \(a\). \\
    Initialize \(c = 0\) \Comment*[r]{Copy Pointer} 
    Initialize \(d = 0\) \Comment*[r]{Deletion Pointer}
    \For{\(i = 0, \ldots L\)}{
      \(m[i] \leftarrow m[\max{(i-1, 0)}]\) \\
      \If{\(a[i] = \)\texttt{<bkspc>}}{
        \(m[i, c] \leftarrow 0\) \\
        \(m[i, d] \leftarrow 0\) \\
        \(m[i, i] \leftarrow 1\) \\
        \(x[i] \leftarrow x[c]\)\\
        \(p[i] \leftarrow p[c]\)\\
        \(d \leftarrow i\)\\
        \(c \leftarrow\) element of last nonzero element in \(m[i, 0:c]\), else 0. 
      }
      \Else {
        \(m[i, i] \leftarrow 1\)\\
        \(d \leftarrow d + 1\)\\
        \(c \leftarrow\) element of first nonzero element in \(m[i, c+1:L]\), else 0. \\
        \(p[i] \leftarrow p[d-1] + 1\)
      }
    \If{\(i = 0\)}{
      \(c \leftarrow 0\) \Comment*[r]{Special cases for initial steps}
      \(d \leftarrow 0\)\\
    }
    \If{\(i = 1\)}{
      \(c \leftarrow 0\)\\
      \(d \leftarrow 1\)\\
    }
    }
    \label{sec:algorithm_a}
  \end{algorithm2e}
  In algorithm A we give a method to convert a sequence of actions into a masked sequence of inputs and corresponding labels, masks and position ids. Although it can also be implemented in a stack-based fashion, we write it as an imperative algorithm so it can be compiled with the \texttt{jit} operation in JAX or the \texttt{compile} operation in PyTorch. Recall that the idea is to replace a sequence of actions with a sequence of inputs and corresponding labels, masks and position ids. The input sequence at position \(t\) should correspond to the state at position \(t\) (when the corresponding mask is applied) and the label at position \(t+1\) is the action taken at position \(t\). A consequence of this is that the inputs should never include a \texttt{<bkspc>} token. The main idea is that if we have a sequence of actions \texttt{[..., a, b, <bkspc>, ...]}, the corresponding inputs are \texttt{[..., a, b, a, ...]}, while the masks for the second \texttt{a} onwards mask out the first \texttt{a, b}. However, the possibility of multiple backspaces introduces some complexity.
  
  The approach of the algorithm is to keep a running copy pointer and deletion pointer. The deletion pointer points to the cell of the mask that must be zeroed out for subsequent positions in the sequence, while the copy pointer points to the position that must be copied to the current cell of the input (and also zeroed out in the mask). When a backspace occurs, the deletion pointer is set to the current index, and the copy pointer is sent backwards to the last non-deleted position. When a backspace doesn't occur, the deletion pointer is incremented by 1 and the copy pointer is moved forwards to the first non-deleted position. 

\section{Motivating Example Algebra}
We consider the case of a length- $n$ Markov chain with an additional node coming from each node. These nodes correspond to the dashed nodes in figure~\ref{fig:motivation}. We write the dashed nodes as $x_\text{term}$. As in the figure, we have \(P_\text{data}(x_\text{term}) = 0\), \(P_\text{model}(x_\text{term}) = \epsilon\). We wish to compute the divergence between the two distributions. 
\subsection{KL-Divergence}
We have
\begin{align*}
  \operatorname{D_\text{KL}}(P\|Q) = \Ex{x\sim P}{\log P(x) - \log Q(x)} = -n \log Q(1-\epsilon). 
\end{align*}

\subsection{Reverse KL-Divergence}
We have
\begin{align*}
  \operatorname{D_\text{KL}}(Q\|P) = \Ex{x\sim Q}{\log Q(x) - \log P(x)} = \infty,
\end{align*}
since \(P(x_\text{term} = 0)\) and \(Q(x_\text{term}) \neq 0\).
\subsection{\(\chi^2\)-Divergence}
We have
\begin{align*}
  \operatorname{D_{\chi^2}} (Q, P)  = \Ex{x \sim Q}{\left(\frac{P(x)}{Q(x)} - 1\right)^2}
\end{align*}

\section{Proofs for section 3.2}
\subsection{Proof for Saddle Point Theorem}
\label{sec:proof-saddle-point}
We wish to show that
\[
  \inf_{\rho_\theta}\sup_r L(\theta, r) = \sup_r \inf_{\rho_\theta} L(\theta, r).
\]
Now, the set \(\mathcal{D}\) of occupancy measures stemming from conditionals is compact and convex, since it is formed from linear constraints: firstly \(\rho \geq 0\), and secondly
  \(\sum_a \rho(s, a) = \gamma \sum_{s'}\rho(s', a)P(s|s', a), \forall s, s'\). Because \(\mathcal{D}\) is a closed subset of the infinite-dimensional simplex \(\Delta^\infty_0\), which is compact \citep{rizzolo2007fixed}, \(\mathcal{D}\) is also compact. 
The set \(\mathcal{R}\) is convex, since it consists of all sequences. Since the inner function is convex in \(\rho_\theta\) and concave in \(r\), we can apply Sion's minimax  theorem \citep{sion1958general} to swap the inner inf and sup. 

\subsection{Proof for Bijection between \(r\) and \(Q\)}
\label{sec:proof-theor-bijection}
Recall that we define the Bellman operator as \(\mathcal{B}_r^\theta\), where \(\mathcal{B}_r^\theta Q(s,a) = r(s,a) + \gamma\Ex{s' \sim \mathcal{P}(s,a)}{V^\theta(s')}\), for the value function \(V^\theta(s) = \Ex{a \sim p_\theta(\cdot | s)}{Q(s,a) - \log p_\theta(a|s)}\). The inverse Bellman operator \(\mathcal{T}^\theta\) is defined as \((\mathcal{T}^\theta Q)(s,a) = Q(s,a) - \gamma\Ex{s' \sim \mathcal{P}(s,a)}{V^\theta(s')}\).

\begin{theorem}
  For a fixed policy \(\theta\), the inverse soft Bellman operator \(\mathcal{T}^\theta\) is bijective, and for any \(r \in \mathcal{R}\), \(Q = (\mathcal{T}^\theta)^{-1}r\) is the unique fixed point of the Bellman operator \(\mathcal{B}_r^\theta\). 
\end{theorem}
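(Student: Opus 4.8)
The plan is to reduce the bijectivity of $\mathcal{T}^\theta$ to the Banach fixed-point theorem applied to $\mathcal{B}_r^\theta$, after first recording the elementary algebraic identity that links the two operators. I would work in the Banach space of bounded real-valued functions on $\mathcal{S}\times\mathcal{A}$ equipped with the supremum norm; since this is a space of bounded functions on a set it is complete, and because $\mathcal{A}$ is finite the entropy contribution $-\mathbb{E}_{a\sim p_\theta(\cdot|s)}[\log p_\theta(a|s)]$ to $V^\theta$ is bounded by $\log|\mathcal{A}|$, so $\mathcal{B}_r^\theta$ maps bounded $Q$ to bounded $Q$ whenever $r$ is bounded.

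First I would observe the key equivalence: unwinding the definitions, the fixed-point equation $Q = \mathcal{B}_r^\theta Q$ is exactly the statement $Q(s,a) - \gamma\mathbb{E}_{s'\sim\mathcal{P}(s,a)}[V^\theta(s')] = r(s,a)$, i.e. $\mathcal{T}^\theta Q = r$. Thus being a fixed point of $\mathcal{B}_r^\theta$ and solving $\mathcal{T}^\theta Q = r$ are one and the same condition, which is what ties the ``inverse'' in the statement to a genuine operator inverse.

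Next I would prove that $\mathcal{B}_r^\theta$ is a $\gamma$-contraction. For two candidates $Q_1, Q_2$ the reward term cancels, leaving $\mathcal{B}_r^\theta Q_1 - \mathcal{B}_r^\theta Q_2 = \gamma\mathbb{E}_{s'}[V_1^\theta(s') - V_2^\theta(s')]$; and since the entropy terms in $V^\theta$ do not depend on $Q$, we get $V_1^\theta(s)-V_2^\theta(s) = \mathbb{E}_{a\sim p_\theta(\cdot|s)}[Q_1(s,a)-Q_2(s,a)]$, which is bounded in absolute value by $\|Q_1-Q_2\|_\infty$. Taking the supremum over $(s,a)$ then gives $\|\mathcal{B}_r^\theta Q_1 - \mathcal{B}_r^\theta Q_2\|_\infty \le \gamma\|Q_1-Q_2\|_\infty$, a genuine contraction because $\gamma\in(0,1)$. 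The Banach fixed-point theorem then yields, for each bounded $r$, a unique fixed point $Q^*=(\mathcal{T}^\theta)^{-1}r$.

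Finally I would assemble bijectivity from these pieces. Surjectivity is immediate: given any $r$, the fixed point $Q^*$ produced above satisfies $\mathcal{T}^\theta Q^* = r$ by the equivalence, so $r$ lies in the image. Injectivity follows from uniqueness: if $\mathcal{T}^\theta Q_1 = \mathcal{T}^\theta Q_2 = r$, then both $Q_1$ and $Q_2$ are fixed points of $\mathcal{B}_r^\theta$, and uniqueness of the fixed point forces $Q_1=Q_2$. The main obstacle is not any single computation but the infinite-dimensional bookkeeping: I must ensure the chosen function space is complete and that $\mathcal{B}_r^\theta$ genuinely preserves it, which is precisely where finiteness of $\mathcal{A}$ (to bound the entropy term) and boundedness of $r$ enter — in the finite-state setting these would be automatic, but here they are the conditions that make the contraction argument legitimate.
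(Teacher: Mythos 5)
Your proof is correct, and it takes a slightly different route from the paper's. The paper works with an explicit (infinite) transition matrix $P^\theta$, writes the relation $\boldsymbol{r} = \boldsymbol{Q} - \gamma P^\theta(\boldsymbol{Q} - \log \boldsymbol{p_\theta})$ in vector form, and inverts $I - \gamma P^\theta$ directly, citing $\|\gamma P^\theta\|_\infty < 1$; this yields a closed-form expression for $Q$ in terms of $r$ and hence bijectivity, with the fixed-point claim read off from the resulting vector expansion. You instead invoke the Banach fixed-point theorem: you establish the algebraic equivalence between ``$Q$ is a fixed point of $\mathcal{B}_r^\theta$'' and ``$\mathcal{T}^\theta Q = r$,'' show $\mathcal{B}_r^\theta$ is a $\gamma$-contraction in sup norm on bounded functions, and derive injectivity and surjectivity from existence and uniqueness of the fixed point. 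The two arguments rest on the same underlying fact (the Neumann series for $(I-\gamma P^\theta)^{-1}$ is the linear incarnation of your contraction iteration), but your presentation has two advantages: it avoids the somewhat informal manipulation of infinite matrices by working in a genuine Banach space of bounded functions, and it makes explicit the hypotheses the paper leaves implicit --- completeness of the function space, boundedness of $r$, and finiteness of $\mathcal{A}$ to control the entropy term. The paper's version, in exchange, produces an explicit formula for $(\mathcal{T}^\theta)^{-1}r$, which is occasionally useful downstream. Both are valid proofs of the stated theorem.
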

\begin{proof}
The proof is very similar to the proof of lemma 3.2 in \citet{garg2021iq}. We construct an infinite matrix \(P^\theta \in \R^{(\mathcal{S} \times \mathcal{A}) \times (\mathcal{S} \times \mathcal{A})}\), where \((P^\theta f)(s,a) = \Ex{s' \sim \mathcal{P}(\cdot | s,a), a' \sim p_\theta(\cdot | s')}{f(s', a')}\). The matrix \(P^\theta\) corresponds to the transition matrix for the given MDP and the policy \(\theta\). We then have \(r = \mathcal{T}^\theta Q\), for any \(Q\). Then, \(\boldsymbol{r} = \boldsymbol{Q} - \gamma P^\theta (\boldsymbol{Q} - \log \boldsymbol{p_\theta})\). Rearranging, we get \(\boldsymbol{Q} = (I - \gamma P^\theta )^{-1}(\boldsymbol{r} - \log \boldsymbol{p_\theta}) + \log \boldsymbol{p_\theta}\). We can do this since \(|\gamma P^\theta|_\infty < 1\) if \(\gamma < 1\), and so \(I - \gamma P^\theta\) is invertible, even in this infinite-dimensional setting. We also see that \(\boldsymbol{Q}\) has a unique vector expansion \(\boldsymbol{Q} = \boldsymbol{r} + \gamma P^\theta (\boldsymbol{Q} - \log \boldsymbol{p_\theta})\). Since this is the (unique) vector expansion of \(\mathcal{B}_r^\theta\), we have \(Q = (\mathcal{T}^\theta)r = \mathcal{B}_r^\theta Q\)
\end{proof}
\subsection{Telescoping Sum Proofs}
\label{sec:proof-theor-occup}
In this section we prove various theorems related to telescoping sums and value functions. These mostly follow from \citet{kostrikov2019imitation} and \citet{garg2021iq}.
\begin{proposition}
  For a policy \(p_\theta\), initial state distribution \(\mathcal{P}_0\), value function \(V^\theta(s) = \Ex{a \sim p_\theta(\cdot | s)}{Q(s,a) - \log p_\theta(a|s)}\), the following identities hold:
\begin{align}
  \Ex{s, a \sim \rho_\theta}{(\mathcal{T}^\theta Q)(s, a)} + H[\rho_\theta] & = (1 - \gamma)\Ex{s_0 \sim \mathcal{P}_0}{V^\theta(s_0)}\\
                                                                           & = \Ex{s, s' \sim \rho}{V^\theta(s) - \gamma V^\theta(s')},
\end{align}
where \(\rho\) is any occupancy measure, and \(s, s' \sim \rho\) denotes sampling \(s, a\) from \(\rho\) and \(s'\) from \(\mathcal{P}(a, s)\). 
\end{proposition}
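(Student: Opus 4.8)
The plan is to prove both identities at once by a single telescoping argument that collapses a discounted sum over time to its boundary ($t=0$) term. The only ingredient I need is the standard translation between occupancy expectations and discounted time-sums: for any bounded state function $f$ and any occupancy $\rho$ with associated policy $p$ and time-$t$ state $s_t$,
\[
  \Ex{s,a \sim \rho}{f(s)} = (1-\gamma)\sum_{t \geq 0}\gamma^t\,\mathbb{E}[f(s_t)],
  \qquad
  \Ex{s,a \sim \rho,\, s' \sim \mathcal{P}(\cdot|s,a)}{f(s')} = (1-\gamma)\sum_{t \geq 0}\gamma^t\,\mathbb{E}[f(s_{t+1})].
\]
Both follow directly from the definition $\rho(s, a) = (1-\gamma) p(a|s)\sum_t \gamma ^{t} P(s_t = s)$ in \eqref{eq:rho-def}: the first after marginalizing out $a$, and the second using that drawing $s' \sim \mathcal{P}(\cdot|s,a)$ from a time-$t$ pair produces exactly the time-$(t+1)$ state.

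For the first identity I would expand $\mathcal{T}^\theta Q$ via its definition and group the discount-free part with the entropy term, which here is the causal entropy $H[\rho_\theta] = -\Ex{s,a\sim\rho_\theta}{\log p_\theta(a|s)}$:
\begin{align*}
  \Ex{s,a \sim \rho_\theta}{(\mathcal{T}^\theta Q)(s,a)} + H[\rho_\theta]
  &= \Ex{s,a \sim \rho_\theta}{Q(s,a) - \log p_\theta(a|s)} - \gamma\,\Ex{s,a \sim \rho_\theta,\, s' \sim \mathcal{P}}{V^\theta(s')}.
\end{align*}
Since $\rho_\theta(s,a) = \rho_\theta(s)\,p_\theta(a|s)$, the inner expectation over $a$ in the first term is exactly the definition of $V^\theta(s)$, so the first term collapses to $\Ex{s \sim \rho_\theta}{V^\theta(s)}$. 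Applying the two translation identities with $f = V^\theta$ then yields
\[
  (1-\gamma)\sum_{t\geq0}\gamma^t\mathbb{E}[V^\theta(s_t)] - (1-\gamma)\sum_{t\geq0}\gamma^{t+1}\mathbb{E}[V^\theta(s_{t+1})],
\]
and reindexing the second sum ($u = t+1$) cancels every term except $t=0$, leaving $(1-\gamma)\Ex{s_0 \sim \mathcal{P}_0}{V^\theta(s_0)}$.

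For the second identity I would run the identical computation directly on $\Ex{s,s'\sim\rho}{V^\theta(s) - \gamma V^\theta(s')}$ for an \emph{arbitrary} occupancy $\rho$: the two translation identities turn it into the same pair of discounted sums, which telescope to $(1-\gamma)\Ex{s_0\sim\mathcal{P}_0}{V^\theta(s_0)}$. The key observation making this hold for any $\rho$, not just $\rho_\theta$, is that in our MDP every policy shares the same deterministic initial distribution $\mathcal{P}_0$ (the \texttt{<begin-sentence>} token), so the surviving boundary term is policy-independent; this is exactly what later permits evaluating the value-difference term on samples from any policy.

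The main obstacle is analytic rather than algebraic: justifying the interchange of the infinite time-sum with the expectation, and the reindexing, over the countably infinite state space. I would secure this by absolute convergence — with $\gamma < 1$ and $V^\theta$ bounded on the reachable states (finite action set, $V^\theta$ inheriting boundedness from $Q$ together with the $-\log p_\theta$ term), the series $\sum_t \gamma^t \mathbb{E}[|V^\theta(s_t)|]$ converges, so Fubini/Tonelli applies and the telescoping is rigorous. Some care is also needed at terminal (\texttt{<eos>}) states, where self-transitions make the time-sum genuinely infinite; but $V^\theta$ is constant along such a self-loop, so the tail is a convergent geometric series and is absorbed into the same bookkeeping.
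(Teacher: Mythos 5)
Your proposal is correct and follows essentially the same route as the paper: expand \(\mathcal{T}^\theta Q\), absorb the entropy term into the definition of \(V^\theta\) to obtain \(\Ex{s,s'\sim\rho_\theta}{V^\theta(s)-\gamma V^\theta(s')}\), then unroll the occupancy measure as a discounted time-sum and telescope down to the \(t=0\) boundary term, with the extension to arbitrary \(\rho\) resting on the shared initial distribution \(\mathcal{P}_0\). Your added care about absolute convergence and the terminal self-loop states is a welcome tightening of details the paper leaves implicit, but the underlying argument is the same.
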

\begin{proof}
We have
\begin{align}
  \Ex{s, a \sim \rho_\theta}{(\mathcal{T}^\theta) Q(s, a)} + H\left[ p_\theta \right] & = \Ex{s, a \sim \rho_\theta}{Q(a, s) - \gamma \Ex{s' \sim \mathcal{P}(s,a)}{V^\theta(s')} - \log p_\theta(a|s)}\\
 & = \Ex{s,s' \sim \rho_\theta}{V^\theta(s) - \gamma V^\theta(s')}. 
\end{align}
By the definition of the occupancy measure and expanding, we have
\begin{align}
  \Ex{s, s' \sim \rho_\theta}{V^\theta(s) - \gamma V^\theta(s')} & = (1-\gamma)\left[\left[\mathbb{E}[V^\theta(s_0)] - \gamma \mathbb{E}[V^\theta(s_1)]\right] + \gamma\left[\mathbb{E}[V^\theta(s_1)] - \gamma \mathbb{E}[V^\theta(s_2)] + \ldots\right]\right]\\
                                                                          & = (1-\gamma)\mathbb{E}[V^\theta(s_0)].
\end{align}
Because \(s_0\) does not depend on \(\rho\), we can expand the sum in the opposite direction to show that \(\Ex{s, a \sim \rho_\theta}{(\mathcal{T}^\theta) Q(s, a)} + H\left[ p_\theta \right] = \Ex{s,s' \sim \rho}{V^\theta(s) - \gamma V^\theta(s')}\) for any occupancy \(\rho\).
    \end{proof}

    \subsection{Proof of equivalence of solutions of \(\mathcal{J}\) and \(L\)}
    \label{sec:proof-equiv-betw}
We now reproduce a proposition from \citet{garg2021iq}, 
\begin{proposition}
  In the \(Q\)-policy space, there exists a unique saddle point \((p_\theta^*, Q^*)\), that optimizes \(\mathcal{J}\). That is, \(Q^* = \argmax_{Q \in \Omega} \min_{p_\theta}\mathcal{J}(p_\theta, Q)\) and \(p_\theta^* = \argmin_{p_\theta}\max_{Q \in \mathcal{O}}\mathcal{J}(p_\theta, Q)\). Furthermore, \(p_\theta^*\) and \(r^* = \mathcal{T}^{p_\theta^*}Q^*\) are the solution to the inverse RL objective \(L(p_\theta, r)\). This is proposition 3.4 in \citet{garg2021iq}. 
\end{proposition}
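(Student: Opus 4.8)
The plan is to obtain the saddle point of the two-variable objective $\mathcal{J}(p_\theta, Q)$ by transporting the saddle point of the inverse-RL objective $L(\theta, r)$ across the change of variables $r = \mathcal{T}^\theta Q$. I would rely on three facts already established: the infinite-dimensional minimax identity $\inf_\theta \sup_r L(\theta, r) = \sup_r \inf_\theta L(\theta, r)$ from the Sion-based saddle-point argument; the bijectivity of $\mathcal{T}^\theta$ for each fixed $\theta$, so that $Q \mapsto \mathcal{T}^\theta Q$ is a bijection onto $\mathcal{R}$; and the telescoping identities, which give the pointwise equality $\mathcal{J}(\theta, Q) = L(\theta, \mathcal{T}^\theta Q)$ once $\psi$ is specialized to the $\phi$-based regularizer.

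The central step I would prove first is that the inner minimization over the policy collapses to the soft-greedy policy. Writing $\mathcal{J}(\theta, Q) = \mathbb{E}_{s,a\sim\rho_\text{data}}[\phi(Q(s,a) - \gamma\mathbb{E}_{s'}[V^\theta(s')])] - (1-\gamma)\mathbb{E}_{s_0 \sim \mathcal{P}_0}[V^\theta(s_0)]$ and using $V^\theta(s) = \mathbb{E}_{a\sim p_\theta}[Q(s,a) - \log p_\theta(a|s)] \le \log\sum_{a'}\exp Q(s,a') = V(s)$, with equality iff $p_\theta = p_Q := \mathrm{softmax}(Q)$, I would observe that increasing $V^\theta$ pointwise decreases the second term (it is negated) and also decreases the first, since $\gamma \ge 0$ makes the argument of $\phi$ smaller and $\phi$ is non-decreasing on its effective domain $\Omega$. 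Hence $p_Q$ minimizes both terms simultaneously, giving $\inf_\theta \mathcal{J}(\theta, Q) = \mathcal{J}(p_Q, Q) = \mathcal{J}(Q)$ for every $Q$, where $\mathcal{J}(Q)$ is the single-variable objective of equation \eqref{eq:2} with $V$ the log-sum-exp. This is exactly what licenses eliminating $\theta$ and reading the optimal policy off the $Q$-values.

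With this reduction, I would assemble the saddle point as follows. For fixed $\theta$, bijectivity of $\mathcal{T}^\theta$ gives $\sup_Q \mathcal{J}(\theta, Q) = \sup_r L(\theta, r)$, so $\inf_\theta\sup_Q\mathcal{J} = \inf_\theta\sup_r L = \sup_r\inf_\theta L$ by the minimax identity, while the inner-minimization step gives $\sup_Q\inf_\theta\mathcal{J} = \sup_Q\mathcal{J}(Q)$. Taking the saddle point $(\theta^*, r^*)$ of $L$ and setting $Q^* = (\mathcal{T}^{\theta^*})^{-1}r^*$, the first identity shows $Q^*$ maximizes $\mathcal{J}(\theta^*, \cdot)$, while the inner-minimization step shows $p_{\theta^*} = p_{Q^*}$ attains $\inf_\theta \mathcal{J}(\theta, Q^*)$; together these make $(\theta^*, Q^*)$ a saddle point of $\mathcal{J}$ whose value equals the common optimum. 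Uniqueness I would inherit from $L$: the entropy term $-H[\rho_\theta]$ is strictly convex in the occupancy and strict concavity of $\phi$ (strict convexity of $\psi$) makes $L$ strictly concave in $r$, so $(\theta^*, r^*)$ is unique, and the bijection together with the softmax characterization forces $(\theta^*, Q^*)$ to be unique as well.

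The final claim, that $p_{\theta^*}$ and $r^* = \mathcal{T}^{\theta^*}Q^*$ solve the inverse-RL objective $L$, is then immediate from the construction. I expect the main obstacle to be the inner-minimization lemma in the infinite-dimensional setting: one must verify that $p_Q$ is an admissible policy and genuinely attains the infimum over the (infinite) policy simplex, and that $\phi$ is monotone on the constraint set $\Omega$ for each divergence of interest (monotonicity fails off $\Omega$, e.g. beyond the vertex of the quadratic $\phi$ used for $\chi^2$). I would control the infinite-dimensional bookkeeping — the interchange of the expectation over $s'$ with the minimization, and convergence of the value recursion — using the same contraction estimate $\|\gamma P^\theta\|_\infty < 1$ that underlies bijectivity of $\mathcal{T}^\theta$, together with the compactness of the occupancy set already established for the minimax argument.
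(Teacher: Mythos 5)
You have done strictly more work than the paper does here: the paper's own ``proof'' of this proposition is a one-line deferral to Proposition 3.4 of \citet{garg2021iq}, whereas you reconstruct the argument that citation stands in for. Your reconstruction follows the same overall strategy as the cited result --- transport the unique saddle point of \(L\) through the bijection \(r = \mathcal{T}^\theta Q\), using the pointwise identity \(\mathcal{J}(\theta, Q) = L(\theta, \mathcal{T}^\theta Q)\) from the telescoping lemmas and the Gibbs inequality \(V^\theta(s) \le \log\sum_{a'}\exp Q(s,a')\) with equality exactly at the softmax policy, then inherit uniqueness from strict convexity of the entropic term in the occupancy and strict concavity in \(r\). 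The one place you genuinely diverge is the inner-minimization step: you try to prove \(\inf_\theta \mathcal{J}(\theta,Q) = \mathcal{J}(p_Q, Q)\) for \emph{every} \(Q\) via monotonicity of \(\phi\), and you correctly flag that this hypothesis can fail off the constraint set (e.g.\ the quadratic \(\phi(x) = x - x^2/4\) for the \(\chi^2\) case is decreasing past its vertex). The cited proof --- and the characterization this paper invokes in its main text --- sidesteps that issue by needing the identification \(p_{\theta^*} = \mathrm{softmax}(Q^*)\) only \emph{at the saddle point}, where it follows from the closed form of the optimal policy for the entropy-regularized inner RL problem with reward \(r^*\), with no monotonicity assumption on \(\phi\). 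If you swap your for-all-\(Q\) lemma for that at-the-saddle-point identification, the caveat you raise disappears and the remainder of your argument, including the final claim that \(p_{\theta^*}\) and \(r^* = \mathcal{T}^{p_{\theta^*}}Q^*\) solve \(L\), goes through; what your stronger lemma buys, when \(\phi\) is monotone on \(\Omega\) (as for the KL and JS choices), is the cleaner statement that \(\theta\) can be eliminated from the objective for every \(Q\), not just at the optimum.
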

\begin{proof}
  See \citet{garg2021iq} for the complete proof. The proof given applies directly to our case. 
\end{proof}
\subsection{Proof for Theorem \ref{sec:reform-occup-diverg}}
\label{sec:proof-theor-reformulation}
We can now prove our main result
\begin{proposition}
  With quantities defined in the main text, the following equalities hold for the loss:
  \begin{align*}
    \hspace{-0.3cm} \inf_\theta d_\psi(\rho_\theta, \rho_\text{data}) - H[\rho_\theta] & =  \sup_r \inf_\theta \Ex{s,a \sim \rho_\text{data}}{r(s,a)} - \Ex{s,a \sim \rho_\theta}{r(s,a)} - H[\rho_\theta] - \psi(r),\nonumber\\
 & =  \sup_Q \inf_\theta \Ex{s,a \sim \rho_\text{data}}{(\mathcal{T}^\theta Q)(s,a)} - \Ex{s,a \sim \rho_\theta}{(\mathcal{T}^\theta Q)(s,a)} - H[\rho_\theta] - \psi(\mathcal{T}^\theta Q),\nonumber\\
 & =  \sup_Q \inf_\theta \Ex{s,a \sim \rho_\text{data}}{(\mathcal{T}^\theta Q)(s,a)} - (1-\gamma)\Ex{s_0 \sim \mathcal{P}_0}{V^\theta(s_0)} - \psi(\mathcal{T}^\theta Q),\nonumber\\
 & =  \sup_Q \inf_\theta \mathbb{E}_{s,a \sim \rho_{\text{data}}}\left[\phi(Q(s,a) - \gamma\Ex{s'\sim \mathcal{P}(\cdot | s,a)}{V^\theta(s')})\right]- (1-\gamma) \mathbb{E}_{s_{0} \sim \mathcal{P}_{0}}\left[V^{\theta}\left(s_{0}\right)\right],\nonumber\\
 & =  \sup_Q \inf_\theta \mathbb{E}_{s,a \sim \rho_{\text{data}}}\left[\phi(Q(s,a) - \gamma\Ex{s'\sim \mathcal{P}(\cdot | s,a)}{V^\theta(s')})\right]- \mathbb{E}_{s, s' \sim \rho}\left[V^\theta(s) - \gamma V^\theta(s')\right],\nonumber\\
 & =  \sup_Q \mathcal{J}(Q) = \sup_Q \mathbb{E}_{s,a \sim \rho_{\text{data}}}\left[\phi(Q(s,a) - \gamma\Ex{s'\sim \mathcal{P}(\cdot | s,a)}{V(s')})\right]- \mathbb{E}_{s, s' \sim \rho}\left[V(s) - \gamma V(s')\right],\nonumber
  \end{align*}
\end{proposition}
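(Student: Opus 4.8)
The plan is to establish the displayed chain of equalities sequentially, with each line obtained from the previous one by invoking a result already proved in this section. Throughout, $\theta$ indexes a policy $p_\theta$ and the value function $V^\theta(s) = \Ex{a \sim p_\theta(\cdot|s)}{Q(s,a) - \log p_\theta(a|s)}$ carries the dependence on $\theta$; only in the final line is this dependence removed.

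First I would justify the opening equality. Substituting the definition of $d_\psi$ produces the min-max objective $\inf_{\rho_\theta}\sup_r L(\theta, r)$, and the saddle-point result (proved via Sion's minimax theorem in the compact–convex infinite-dimensional setting established above) lets me exchange the order to $\sup_r \inf_\theta$; writing out $L$ gives the first right-hand side. For the second line I would apply the bijection theorem: since $\mathcal{T}^\theta$ is bijective for each fixed $\theta$, ranging over all rewards $r \in \mathcal{R}$ is equivalent to ranging over all $Q$ via $r = \mathcal{T}^\theta Q$, so I simply reparameterize and substitute $r \mapsto \mathcal{T}^\theta Q$ throughout. The third line uses the first telescoping identity, $\Ex{s,a\sim\rho_\theta}{(\mathcal{T}^\theta Q)(s,a)} + H[\rho_\theta] = (1-\gamma)\Ex{s_0\sim\mathcal{P}_0}{V^\theta(s_0)}$, to collapse the model-occupancy term together with the entropy into a single initial-state value term.

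The fourth line is where the choice of regularizer enters: I would specialize $\psi$ to the separable form $\psi(r) = \Ex{s,a\sim\rho_\text{data}}{g(r(s,a))}$ with $g(r) = r - \phi(r)$ on $\Omega$ (and $+\infty$ otherwise), so that the data term combines with $-\psi(\mathcal{T}^\theta Q)$ into $\Ex{s,a\sim\rho_\text{data}}{\phi((\mathcal{T}^\theta Q)(s,a))}$, and then unfold the definition $(\mathcal{T}^\theta Q)(s,a) = Q(s,a) - \gamma\Ex{s'\sim\mathcal{P}(\cdot|s,a)}{V^\theta(s')}$. The fifth line re-expands $(1-\gamma)\Ex{s_0\sim\mathcal{P}_0}{V^\theta(s_0)}$ using the \emph{second} telescoping identity, $\Ex{s,s'\sim\rho}{V^\theta(s) - \gamma V^\theta(s')} = (1-\gamma)\Ex{s_0\sim\mathcal{P}_0}{V^\theta(s_0)}$, which holds for \emph{any} occupancy $\rho$ precisely because $s_0$ is independent of the policy generating $\rho$; this freedom is what later permits sampling from a replay buffer rather than the exact current policy.

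The hard part will be the final line, where the inner $\inf_\theta$ is eliminated. Here I would parameterize the policy directly from the $Q$-values by setting $\log p_Q(a|s) = Q(s,a) - \log\sum_{a'\in\mathcal{A}}\exp Q(s,a')$, so that $V^\theta(s)$ collapses to $V(s) = \log\sum_{a'\in\mathcal{A}}\exp Q(s,a')$. The content to verify is that the minimization over $\theta$ is attained exactly at $p_\theta = p_Q$: this is the saddle-point equivalence (proposition 3.4 of \citet{garg2021iq}, reproduced above), which guarantees that for the objective $\mathcal{J}(p_\theta, Q)$ the unique inner optimizer is the soft-optimal policy induced by $Q$, namely $p_Q$. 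Substituting $p_Q$ for the free policy removes $\theta$ from the optimization and yields $\sup_Q \mathcal{J}(Q)$. The main subtlety throughout is ensuring that each operator identity and the inf–sup exchange remain valid over the countably-infinite state space; but since the compactness of the occupancy set, the invertibility of $I - \gamma P^\theta$, and Sion's theorem have all been established in the infinite-dimensional setting earlier in this section, these steps carry over unchanged from the finite-state arguments of \citet{garg2021iq} and \citet{kostrikov2019imitation}.
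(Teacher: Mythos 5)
Your proposal is correct and follows essentially the same route as the paper's own proof: Sion's minimax theorem for the initial inf--sup exchange, the bijectivity of $\mathcal{T}^\theta$ to reparameterize $r$ as $\mathcal{T}^\theta Q$, the two telescoping-sum identities for the third and fifth lines, the separable regularizer $g(r) = r - \phi(r)$ for the fourth, and the saddle-point equivalence from \citet{garg2021iq} to eliminate the inner minimization over $\theta$ via the $Q$-induced policy. The only cosmetic difference is that the paper invokes the saddle-point correspondence already at the second line (alongside the bijection) rather than deferring it entirely to the last step, but the ingredients and their roles are identical.
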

\begin{proof}
  The first equality is proven in section \ref{sec:proof-saddle-point}. The second line follows from sections \ref{sec:proof-theor-bijection} and \ref{sec:proof-equiv-betw}. The first section shows that the objectives \(\mathcal{J}(Q, \theta)\) and \(L(\theta, r)\) are the same, by the bijective property of \(\mathcal{T}\). The second section proves that the (unique) saddle points of the objectives correspond to the same solutions.

  The third line follows from the telescoping sum given in section \ref{sec:proof-theor-occup}. The fourth line follows from the substitution of a general \(\psi(r)\) with a simpler regularizer \(\Ex{s,a \sim \rho_\text{data}}{g(r(s,a))}\), where \(g(r) = r - \phi(r)\) if \(r \in \Omega\), and infinity otherwise. This allows us to ground out the divergence minimization directly to concrete divergences such as the KL-divergence, JS-divergence, \(\chi^2\)-divergence, etc. We discuss this more extensively in section \ref{sec:divergences}.  In the fifth line we expand the telescoping sum in a different way using the result in section \ref{sec:proof-theor-occup}. This allows us to incorporate samples from any policy, in order to decrease variance.

  In the final line we parameterize the policy from the \(Q\)-values, setting \(\log p_Q(a|s) = Q(s,a) - \log \sum_{a' \in \mathcal{A}}\exp Q(s, a')\). The fact that \(\sup_Q \inf_\theta \mathcal{J}(p_\theta, Q) = \sup_Q \mathcal{J}(p_Q, Q)\) follows from the fact that there is a unique saddle point for \(\mathcal{J}(p_\theta, Q)\), the fact that \(\mathcal{J}(p_Q, Q)\) is concave in \(Q\), and that the saddle point for \(\mathcal{J}(p_Q, Q)\) has a supremum in \(Q\) where \(\theta = \theta^*\), with \(\log p_\theta^*(a|s) = Q^*(s,a) - \log \sum_{a' \in \mathcal{A}}\exp Q^*(s, a')\) and \(Q^*\) the corresponding supremum in \(Q\). This allows elimination of \(\theta\) from the optimization process entirely, and completes the proof.
\end{proof}

\subsection{Properties of the Plug-In Estimator}
The plug-in estimator \(\hat{\mathcal{J}}\) is unbiased from the linearity of expectation. Under the assumption that the expected loss is finite, the plug-in estimator is also consistent. This follows from the law of large numbers. We can ensure that the expected loss is finite with the \(\chi^2\)-divergence by bounding the logits within some large range.

\section{Choices of divergence measures}
\subsection{$f$-divergences}
We recall that for any $f$-divergence with \(D_f(P,Q) = \Ex{x \sim Q}{f(P(x)/Q(x))}\), we have the variational form
\[
  D_f(P, Q) = \sup_{\phi}\left\{\Ex{x\sim P}{\phi(x)} - \Ex{x\sim Q}{f^*(\phi(x)}\right\},
\]

with the convex conjugate \(f^*(y) = \sup_x\left\{x \cdot y - f(x)\right\}\)) and a discriminator \(\phi: \mathcal{X} \to \R\). Optimizing a model against an \(f\)-divergence other than the KL-divergence typically involves a difficult min-max optimization problem where we simultaneously improve the model and improve the discriminator \(\phi\). This is subject to unstable training \citep{kwonTrainingWassersteinGANs2021, jabbarSurveyGenerativeAdversarial2020, tangLessonsLearnedTraining2020, goodfellow2020generative}.

\label{sec:divergences}
In the main paper, we explain that we require a divergence
\begin{align*}
  d_\psi(\rho_\theta, \rho_\text{data}) = \psi^*(\rho_\theta - \rho_\text{data}).
\end{align*}
With our choice of \(\psi\), we get that

\begin{align*}
  d_{\psi}\left(\rho, \rho_{\text{data}}\right)=\max _{r \in \mathcal{R}_{\psi}} \mathbb{E}_{s, a \sim \rho_{\text{data}}}[\phi(r(s, a))]-\mathbb{E}_{s, a \sim \rho_\theta}[r(s, a)]
\end{align*}

We can readily connect these to \(f\)-divergences. Recall that the variational formulation of the $f$-divergence is
\begin{align}
  D_f(P, Q) = \sup_{g}\left\{\Ex{x\sim P}{g(x)} - \Ex{x\sim Q}{f^*(g(x)}\right\},
\end{align}
so we can see that the function \(\phi\) we need is simply \(-f^*(-x)\).

\subsection{KL-divergence}
Note that we define our divergence in the reverse fashion to the usual convention, so to obtain the typical forward KL under the expectation of the data, we must use the reverse-KL $f$-divergence, with $f(x) = x\log x$. This gives $\phi(x) = -e^{-(x+1)}$. However, since we can always shift an f-divergence's $f$ by a constant multiple of $(x-1)$ without changing the divergence (which should be clear from observing cancellations in the definition of the $f$ divergence), we shift by -1 and (after working through the derivations) have a simpler \(\phi(x) = -e^{-x}\).

If we take the objective from equation \ref{eq:sm-objective}, and observe the limit as \(\alpha \to 0, \), we have  \(\lim_{\alpha \to 0} \mathcal{J} _{\ell_\theta} = \operatorname{D_\text{KL}}(\rho_\text{data}\|\rho_\theta)\). This is because \(\lim_{\alpha \to 0} \frac{1}{\alpha} -\exp(-\alpha x) = -1 + x\). Examining the terms in \(\hat{\mathcal{J}} (\ell_\theta)\), we can combine the two value sums into a single sum over the data sequence. Then, we can cancel the \(\gamma V(s_{i+1})\) terms from the first and second sum. This leaves the term \(\sum_i^N \gamma^i (\ell_\theta(a_i|s_i) - V(s_i))\). This is precisely a weighted variant of the typical maximum-likelihood loss. 

\subsection{Jenson-Shannon Divergence}
\label{sec:js-div}
The Jenson-Shannon divergence has \(f(x) = -(x+1)\log(\frac{x+1}{2}) + x\log x\). This leads to \(\phi(x) = \log(2-e^{-x})\). This is an interesting \(\phi\) because it is equal to \(-\infty\) for \(x < -\log 2\). Since the \(x\) in this case is the value of \(r\) obtained from the model's logits, it is certainly possible that the value may be less than \(-\log 2\). In practice, we could replace \(\phi\) with a sharply descending quadratic for all \(x\) close to \(-\log 2\) and below. This gives a penalizing effect on small \(r\), while not causing (too many) numerical issues.

\subsection{\(\chi^2\)-Divergence and \(\chi^2\)-Mixture Divergence}
For the \(\chi^2\)-divergence, we have \(f(x) = ((t-1)^2)\), leading to \(\phi(x) = (x - x^2 / 4)\).

As described in \citet{al2023ls}, we can add a regularization term by computing \(\psi_{\rho}(r)=\beta c \mathbb{E}_{\rho_\text{data}}\left[r(s, a)^{2}\right]+(1-\beta) c \mathbb{E}_{\rho_\theta}\left[r(s, a)^{2}\right]\). In other words, instead of computing the \(r^2/4\) term on the expert trajectories only, we also compute this for the policy trajectories as well. We set \(c = 0.5\) and \(\beta = 0.5\). This results in an even mixture of regularization contributions from the expert and policy. Although this was introduced in \citet{garg2021iq} heuristically, it was shown in  \citet{al2023ls} that this has a well-motivated derivation as a result of the divergence between the data occupancy and the mixture between the data occupancy and the policy occupancy:
\begin{align*}
  \begin{aligned} 2 \chi^{2}(\rho_{\text{data}} \| \underbrace{\frac{\rho_{\text{data}}+\rho_{\theta}}{2}}_{\rho_{\text {mix }}}) & =\sup _{r} 2\left(\mathbb{E}_{\rho_{\text{data}}}[r(s, a)]-\mathbb{E}_{\rho_{\text{mix}}}\left[r(s, a)+\frac{r(s, a)^{2}}{4}\right]\right) \\ & =\sup _{r} \mathbb{E}_{\rho_{\text{data}}}[r(s, a)]-\mathbb{E}_{\rho_{\theta}}[r(s, a)]-c \alpha \mathbb{E}_{\rho_{\text{data}}}\left[r(s, a)^{2}\right]-c(1-\alpha) \mathbb{E}_{\rho_\theta}\left[r(s, a)^{2}\right].\end{aligned}
\end{align*}
In practice, we find this method leads to better quality generations.

\subsection{Backspaces Are The Only Efficient MDPs for Autoregressive Models}
We introduced the backspace as an additional action that we could take now that we do not view autoregressive generation as necessarily modelling a probability distribution. However, we can show that it is a fairly principled choice of action. In fact, with an autoregressive model, the only actions which can be implemented without requiring recomputation of preceding actions are the actions which generate a new token, or do an \(n\)-step backspace.

To see this, we will first deal with the case of a purely autoregressive model, such as an recurrent neural network (RNN), where processing a sequence \(s_i = (x_1, x_2, \ldots, x_i)\) requires sequential processing of every token \(x_i\). However, we will assume we keep a buffer of the previous hidden states \((h_1, \ldots h_i)\) so we can revert to these under a backspace.
Then, given a current sequence \(s\) and an action \(a\) which causes a transition to another sequence \(s'\), \(s\) and \(s'\) must differ in at least one token. If \(s'\) differs first in a token at location \(i \leq |s|\) then we must revert the hidden states to \(h_i\) and recompute the next tokens from \(i\) to \(|s'|\). If \(|s'|=i\), then this only requires one pass. This includes the case where no reversion takes place, and \(s'\) is \(s\) with an additional token appended (i.e. the traditional appending token case).
In the case where \(s'\) is longer than \(s\) by \(k\) tokens, we must compute \(k\) forward passes. Therefore, the only actions which require one forward pass are those that add a token, or that solely remove a number of tokens.

In the case of a transformer model, the logic is very similar. However, while our analysis is the same as preceding in terms of floating point operations, it is not the same for the latency. As the transformer model can compute a forward pass over multiple input tokens in parallel, an MDP with an action mapping a sequence \(s\) to \(s'\) could be implemented with relatively low latency, even if \(s\) and \(s'\) were very different. Whether floating point operations or latency are the main bottleneck will depend on the particular set-up.

\section{Additional Training Details}
We train each model on four A4000 GPUs with 16GB VRAM each. We keep the batch size at 32 for all models. We use a QLORA \(r\) of 64 for all experiments, and an \(\alpha\) of 16. Gradient checkpointing and reduced precision was used to reduce memory requirements. For all models, we add an additional row to the unembedding layer of the transformer, corresponding to the logits for the \texttt{<bkspc>} action. 
For the SequenceMatch models, we first train against the BC objective alone for \(k\) gradient steps, and then train against a convex combination of the SM loss: \(\mathcal{L}_{\text{total}} = \beta \mathcal{L}_{\text{BC}} + (1 - \beta)\mathcal{L}_{\text{SM}}\), where \(\beta\) is annealed from \(1\) to \(0.2\) linearly over 2,000 gradient steps. For the arithmetic task \(k\) is 10,000. For the text generation task \(k\) is 1,000. 
We use a learning rate scheme consisting of a linear warmup from 0 to 2000 steps, followed by cosine decay. For the extra logits offset head, we use a two-layer MLP with Gelu \citep{hendrycks2016gaussian} nonlinearities and hidden size equal to 8. The inputs to the layer are the position id and the hidden values at the current position. The output of the layer is added directly to the logits. 
For SequenceMatch, we keep a replay buffer of past generated sequences. The replay buffer is first-in-last-out, with the oldest sequences being replaced once the size of the buffer is reached. The size of the replay buffer is 1,000. 
We specify how many times each sequence in the replay buffer will appear in the training data before it is replaced (on average), and from that (and the generation batch size) calculate how frequently a generation step must take place during training. We set the times each sequence will be seen in training to 8.
For the text-generation evaluation, we set the prompt length at 256. We then generate sequences of length 256. For the generation, we set the temperature to 1 and the top-p sampling to 1, with no top-k sampling.
Since the on-policy regularization term needs the input to be generated from the current policy, we mask the prompt when calculating the regularization for the generated sequences.
For the arithmetic task, we mask out the prompt when computing the MLE loss, as this generally leads to more accurate responses~\citep{dettmers2023qlora}. In the arithmetic task, the dataset sequences are constructed as \texttt{\{question\} Solution: \{solution\}}. For the arithmetic task with `ground-truth' noise, we add in tokens after the position of the \texttt{Solution} token, of digits in the range \({0, \ldots, b-1}\) for a base-\(b\) question. Because the Llama2 tokenizer has separate tokens for combinations of letters such as \texttt{a}, \texttt{ab}, etc, while having no tokens for combinations of numbers, it would be difficult to construct the ground-truth noise in the bases higher than 10. Therefore, we filter out questions with a base higher than 10. We do not add any noise to the problem statement.

\section{Overhead}
In this section we discuss the relative overhead of using the SequenceMatch training objective. We have additional overhead due to the necessity of sampling completions from the model. In addition, the actual loss has some additional complexity compared to the typical MLE loss, as we must extract the terminal values and compute a telescoping sum. Additionally, the on-policy reward regularization term requires computing logits with respect to the generated batch in addition to the data batch. Due to the fact that we use a batch of masks with a complex masking patterns, some GPU kernels that are specialized for causal masking cannot be utilized.

However, we note that it's not necessary to sample at every gradient step, since we can accumulate old trajectories in a replay buffer and re-use them. Furthermore, we can typically sample using a higher batch size than the gradient step batch size, so requiring fewer sampling steps per gradient step. In tables \ref{tab:execution-time-results1} and \ref{tab:execution-time-results2} we show the breakdown of times necessary for a gradient step for each method, in the arithmetic and text modelling cases. We see that the loss computation and gradient step typically takes around two or three times as long for the SequenceMatch objective than the MLE objective, due to the additional computation specified above. The sampling adds additional overhead, depending largely on the length of the sequence that is sampled.
We note that the bottleneck from sampling could in principle be completely removed by adding a separate set of GPUs which independently generate sequences given the latest parameters and write to a buffer which is polled by the main training thread. Due to time constraints and the additional complexity of synchronizing separate processes, we did not implement this additional distributed training approach.

\begin{table}[] 
\begin{tabular}{@{}lllll@{}}
\toprule
Training Procedure & Gradient Step   & Sampling Time & Grad Steps per Sample & Total Amortized Time \\ \midrule
MLE                & 1.5 \(\pm\) 0.1 & N/A           & N/A                   & 1.5 \(\pm\) 0.1      \\
  SequenceMatch      & 5.3 \(\pm\) 0.5 & 50 \(\pm\) 10 & 16                    & 8.1 \(\pm\) 0.2      \\ \bottomrule
\end{tabular}
\caption{Execution time of various parts of the training loop for the different models for the 1024 context length, \(\chi^2\) objective with model rollouts regularization, with the Llama-2-7b model and 4-bit quantized low-rank training. We show the raw time to sample a batch of trajectories, as well as the time for sampling once amortized due to the fact that we do not sample every training step, and that the training. Because of the unequal memory constraints during quantized low-rank training, we are able to use a large batch size comparatively when generating, allowing us to generate infrequently. Note this is for a batch size per GPU of 1, so each optimization step takes eight times as long for a minibatch size of 32 and 4 GPUs}
    \label{tab:execution-time-results1}
  \end{table}

\begin{table}[] 
\begin{tabular}{@{}lllll@{}}
\toprule
Training Procedure & Gradient Step   & Sampling Time & Grad Steps per Sample & Total Amortized Time \\ \midrule
MLE                & 1.2 \(\pm\) 0.1 & N/A           & N/A                   & 1.5 \(\pm\) 0.1      \\
  SequenceMatch    & 2.1 \(\pm\) 0.5 & 2 \(\pm\) 0.1 & 32                    & 2.16 \(\pm\) 0.1      \\ \bottomrule
\end{tabular}
\caption{Execution time of various parts of the training loop for the different models for the arithmetic task, \(\chi^2\) objective with model rollouts regularization, with the Llama-2-7b model and 4-bit quantized low-rank training. We show the raw time to sample a batch of trajectories, as well as the time for sampling once amortized due to the fact that we do not sample every training step, and that the training. Because of the unequal memory constraints during quantized low-rank training, we are able to use a much larger batch size comparatively when generating, allowing us to generate infrequently. This table is for a batch size per GPU of 8.}
    \label{tab:execution-time-results2}
  \end{table}

  \section{Additional Experiments}
  \label{sec:addit-exper}
  \subsection{Different Divergences}
  We briefly experiment with the Jenson-Shannon divergence described in section \ref{sec:js-div}. As discussed, the main issue is that the function \(\phi\) is not defined for inputs less than \(-\log 2\), and asymptotically approaches \(-\infty\) as the input approaches \(-\log 2\). We replace the function \(\phi\) with a linear surrogate for \(x < -\log 2 + \delta\), with \(\delta = 0.01\). The linear surrogate was chosen to be a continuous, differentiable extension of the function \(\phi\) from the point \(-\log 2 + \delta\).

  We kept all the hyperparameters the same as in the experiments with \(\chi^2\) divergence. For both the noise settings considered in the main paper, the JS-divergence was only able to achieve 60\% accuracy, compared to 85-90\% for the \(\chi^2\)-divergence. There were frequent spikes in the gradient norm compared to the \(\chi^2\) divergence. 
  \subsection{Opus\textunderscore books en-fr}
  This translation task was implemented similarly to the arithmetic task in the main paper. The prompt was the English `question’ and the completion being the French `answer’. We use random noise tokens, with noise level 0.2. The BLEU scores were computed using the \texttt{sacrebleu} package. We used a subset of the data consisting of examples where the question was less than 64 tokens long and the answer was less than 64 tokens long. This was still a majority of the examples in the dataset. The BLEU scores are shown in table \ref{tab:bleu}. We see that the SequenceMatch model is able to achieve a small increase in BLEU score compared to the MLE and BC models, although the results have quite high variance. Similarly to the arithmetic experiment, we notice that the backspace is used in generations to correct mistakes, such as the following completion:

English: If he only set two to-day\ldots He would go back to his desk and notice the absence of Meaulnes.

Generation: Si c’était deux qu’il faisait… Il call\texttt{<bkspc>} rentrait dans son bureaut\texttt{<bkspc>}au, déplorant l’absence de Meaulnes.

We expect that the performance could be improved by using targeted noise tokens, such as common incorrect French translations from a pretrained language model.

  \begin{table}[h] 
\centering \begin{tabular}{@{}lll@{}}
\toprule
  Method & \(\eta = 0.02\) &  \(\eta = 0.2\)\\ \midrule
  MLE    & 31 \(\pm\) 2    & 31 \(\pm\) 2 \\
  BC     & 32 \(\pm\) 2    & 34 \(\pm\) 2 \\  
  SM     & 37 \(\pm\) 4    & 36 \(\pm\) 4 \\ \bottomrule
\end{tabular}
\caption{BLEU scores obtained by the MLE model, the Behavioral Cloning model, and the SequenceMatch model on the en-fr translation task.}\label{tab:bleu}
  \end{table}

  \subsection{Arithmetic\textunderscore\textunderscore mul}
  This task was implemented exactly as for the arithmetic addition task, with the same number of training examples. We used only bespoke noise for this experiment. We report the accuracies in table \ref{tab:mul-acc}. As in the previous experiments, we see an improvement for SequenceMatch, although in this case the behavioral cloning objective does not lead to a significant increase from MLE.
    \begin{table}[h] 
\centering \begin{tabular}{@{}lll@{}}
\toprule
  Method & \(\eta = 0.02\) &  \(\eta = 0.2\)\\ \midrule
  MLE    & 0.49 \(\pm\) 0.04    & 0.49 \(\pm\) 0.04 \\
  BC     & 0.51 \(\pm\) 0.02    & 0.5 \(\pm\) 0.03 \\  
  SM     & 0.53 \(\pm\) 0.03    & 0.56 \(\pm\) 0.04 \\ \bottomrule
\end{tabular}
\caption{Accuracies obtained by the MLE model, the Behavioral Cloning model, and the SequenceMatch model on the Arithmetic\textunderscore\textunderscore mul task.}\label{tab:mul-acc}
\end{table}
  \subsection{Numbers\textunderscore\textunderscore list\textunderscore prime\textunderscore factors}
  For this task, we initially used 5,000 training data as in the arithmetic tasks. However, we found that all models had very poor performance, with less than 1\% accuracy rates. We increased the number of training data to 50,000 and found performance increased. However, the models still had a relatively low rate of accuracy and noisy evaluation, so this experiment may not be very informative. We used bespoke noise at two different levels. The results are shown in table \ref{tab:prime-acc}. 

\begin{table}[h] 
\centering \begin{tabular}{@{}lll@{}}
\toprule
  Method & \(\eta = 0.02\) &  \(\eta = 0.2\)\\ \midrule
  MLE    & 0.04 \(\pm\) 0.03    & 0.04 \(\pm\) 0.03 \\
  BC     & 0.07 \(\pm\) 0.02    & 0.06 \(\pm\) 0.03 \\  
  SM     & 0.06 \(\pm\) 0.04    & 0.08 \(\pm\) 0.04 \\ \bottomrule
\end{tabular}
\caption{Accuracies obtained by the MLE model, the Behavioral Cloning model, and the SequenceMatch model on the Numbers\textunderscore\textunderscore list\textunderscore prime\textunderscore factors task.}\label{tab:prime-acc}
\end{table}
  
\section{Additional Related Work}
\subsection{Regularization Terms}
Due to the disadvantages of the KL-divergence discussed in the first section, several additional training objectives have been proposed which take into account the model's generated sequences. Particularly popular is the (sequence-level) unlikelihood loss \citep{welleck2019neural}. At step \(t\), this is given by
\[
\mathcal{L}_{\text{ULS}}^t(P_\theta(\cdot | x_{1:t})) = \Ex{s_{t+N} \sim P_\theta(\cdot | s_t)}{- \sum_{k=t+1}^{t+N}\sum_{c \in \mathcal{C}^k}P_\theta(c|s_{k})}, 
\]
where \(\mathcal{C}^k\) is a set of problematic outputs, such as repeated \(n\)-grams. The loss considers an \(N\)-step sequence generated from the model from \(s_t\) and penalizes repetitions. Although this regularization term performs well in practice, it relies on a heuristic that repetitive text is unnatural, which does not necessarily hold universally.

However, the idea of sampling a sequence from the model and including that sequence in the loss is incorporated in SequenceMatch.
  \section{Examples}
  In this section we give the first few samples from the evaluation set, with no cherry-picking, for the openwebtext task and the SequenceMatch objective. Note that we strip newlines in the examples and generations in order to present them on one page. Furthermore, several unicode characters were generated as random tokens that we cannot represent in the snippet. We label them (U+308A). Cyrillic characters were generated that also cannot be represented, we label them as [ctre] and [nasa] in the samples. 

  In these examples, we take the prompt sequence and change the last token to a random token drawn from the dataset. We see that the SequenceMatch model is able to utilize the \texttt{<bkspc>} token immediately to remove the spurious token.
  The only case where this does not happen is for the prompt \texttt{... A recount is not automatic, operations}, where the random token does make semantic sense, and the model continues by attributing this to a quote from an operations manager.
  
  \pagenumbering{gobble}
\begin{table}
    \vspace{-2.4cm}
    \caption{\small Model Prompts and Completions for the SequenceMatch trained model. We add a random token at the end of the prompt and observe the behavior of the generated sequences.}
 \hspace{-3.5cm}
  \begin{tabularx}{1.49\textwidth}{X@{}@{\hspace{0.5em}}X@{}}
\toprule
\textbf{Prompt} & \textbf{Completion} \\
\midrule
    \tiny \texttt{<bos> Port-au-Prince, Haiti (CNN) -- Earthquake victims, writhing in pain and grasping at life, watched doctors and nurses walk away from a field hospital Friday night after a Belgian medical team evacuated the area, saying it was concerned about security. The decision left CNN Chief Medical Correspondent Sanjay Gupta as the only doctor at the hospital to get the patients through the night. CNN initially reported, based on conversations with some of the doctors, that the United Nations ordered the Belgian First Aid and Support Team to evacuate. However, Belgian Chief Coordinator Geert Gijs, a doctor who was at the hospital with 60 Belgian medical personnel, said it was his decision to pull the team out for the night. Gijs said he requested U.N. security personnel to staff the hospital overnight, but was told that peacekeepers would only be able to evacuate the team. He said it was a "tough decision" but that he accepted the U.N. offer to evacuate after a Canadian medical team, also at the hospital with Canadian securityarono} &
    \tiny \texttt{<bkspc>forces, made a similar decision. He said the military response to the earthquake had been exceptional and he was proud to have worked with them. Gijs had promised the mayor of Port-au-Prince a second field hospital for Friday night. His medical team staffed the hospital from 6 p.m. until about 10 p.m. Friday. But no one came Friday night and there wasn\'t even a security team at the site when the medical team left. Some patients showed signs of serious injuries that were left untreated and an elderly woman is barely conscious. The hospital was in the worst-hit area of the city, and its inhabitants are spending the night in the rubble in the same area. "We were told that they can\'t protect us," Gijs said. "They weren\'t even sure they could protect the patients. And some were, you know, just barely alive at this stage and if they walk out and they are attacked, they would never know what happened." Gijs, describing the pullout as "grief," said he had hoped to do something about getting the} \\
\midrule
    \tiny \texttt{<bos> Former secretary of state Hillary Clinton meets voters at a campaign rally in St. Louis on Saturday. (Melina Mara/The Washington Post) Democratic front-runner Hillary Clinton was ahead by a slim margin in Missouri on Wednesday, but the race remained in limbo pending word on whether rival Sen. Bernie Sanders of Vermont would seek a recount. The delay postponed a definitive answer to whether Clinton had made a clean sweep of five big primaries on Tuesday night. Even if she does not prevail in Missouri, her other victories push her closer to the Democratic presidential nomination even as the considerably weakened Sanders vowed to press on with his insurgent campaign. Clinton won big in Florida, North Carolina and Ohio, while claiming a narrower victory in Illinois. In Missouri, with 100 percent of precincts reporting, Clinton was ahead 310,602 votes to 309,071. With a difference of less than 1 percent, state officials held off calling the race. A recount is not automatic, operations} & \tiny \texttt{manager Michael McLaughlin said. Sanders must first file a request for one with the state elections board. The requests must be filed by noon Friday and paid for by the end of the day on Monday. The Missouri race will linger as a sign of Sanders’s political viability even after his latest losses. Sanders did much better than polls had predicted in Missouri, where Clinton maintains close ties to the Missouri Democratic Party establishment and retains the loyalty of much of the state’s superdelegate-rich electorate. Clinton’s performance Friday in Connecticut, where she won by double digits, may have given her campaign the breathing room it needs to start preparing for the general election. [Clinton cruises to key wins in four large states] Both Clinton and Sanders argued that the latest contests showed that Tuesday night’s results merely added to their delegate lead and they didn’t provide a game-changing test. The large contests in the South, East and Midwest suggested that Sanders still has a chance to compete nationally, while Clinton can mobilize greater support as she turns to campaigning in more states with} \\
\midrule \tiny \texttt{<bos> The opinions expressed by columnists are their own and do not represent the views of Townhall.com. You have to give President Barack Obama credit for one thing: consistency. Nothing is ever his fault. Nothing will ever be his fault. Faulting Fox News and the American people, on the other hand, now that's a different story. Do you remember when Obama traipsed around the country and desperately pleaded with Americans to vote for Hillary Clinton because his agenda and his legacy were on the ballot? He made a similar pitch before the shellacking his party took in the 2014 congressional elections. Yet did he acknowledge after this 2014 failing that he had anything to do with it? Does he own up to his leading role in last month's presidential election? Let's rewind the tape further, to Obama's reaction to his party's stunning defeat in the 2010 congressional elections, which was largely about Obamacare. He didn't acknowledge any personal culpability for visiting that monstrosity on the(U+308A)}
& \tiny \texttt{<bkspc> American people. He didn\'t recognize the failures of his big-government policies. So far so good for Obama\'s legacy, his progeny. So far the lessons of this last presidential election have not made their way into the Oval Office. "This election was a turning point, a decisive moment, a determined moment that means we can\'t go back to the policies that got us here. It\'s time to turn the page," he said in an interview with Al Arabiya\'s Yahya Al-Falluji. In a note-to-self memo, Obama wrote: "Hillary Clinton lost the election because of me." When pressed about how American voters\' desire for policies that are more in line with his big-government presidency, Obama blamed it on people who get their information from the "right-wing media." And if you think this is a sudden realization, a Trump-induced conversion moment, think again. It\'s been part and parcel of the left\'s worldview ever since polls were first projected on a football field screen more than 1}\\
\midrule
\tiny \texttt{<bos> BIGBANG is one of those musical entities that transcends language. It’s one of those rare groups that both innovates and defines the direction a genre takes. Covering a sound that includes hip hop, R\&B and electronic dance, BIGBANG and its solo acts (G-Dragon, T.O.P, Taeyang, Seungri and Daesung) have left a musical imprint that has affected the global music market. In fact, even Diplo, a household name in EDM, worked with G-Dragon and T.O.P for their rap album. So when the band announced its world tour to promote the release of its third full-length studio album MADE after a 3 year hiatus, fans lost their minds – including myself. In fact, tickets for each of BIGBANG’s North American legs sold out. As a result, I was lucky enough to witness this larger-than-life Korean pop group perform a couple Saturday nights ago on Oct. 10 at the Prudential Center in Newark, NJ. As I waited in line to enter the venue with my friends, mobs of fans[ctre]} & \tiny \texttt{<bkspc> streamed in with BIGBANG snuggie shirts and shirts featuring GD, TOP, Taeyang and Daesung. It was no surprise that the majority of the music being played outside the venue were BIGBANG’s hits as well as covers of G-Dragon’s solo releases. We spent most of the night standing on the main floor as music permeated the crowds. While other songs were being played, the main focus was on BIGBANG’s own music. I found myself moving in time to the dance moves that flowed in a smooth, provocative sexual choreography that found G-Dragon snapping his fingers and sweeping his knees while TOP remained stationary in a regal stance in a white blush jacket. As Daesung appeared on stage, the floor broke into a shriek of chants: “Dae dae! Dae dae!” As the moment passed and Daesung climbed a scaffolding with a blue light backdrop, the crowd screamed again, “Dae dae! Dae dae!” As the chants echoed over and over again,}\\
\midrule
\tiny \texttt{<bos> WHAT?!??! I know. That’s what you’re saying right now. “WHAT?! DISNEY HAS A DONUT SUNDAE AND I DIDN’T KNOW ABOUT IT?!” How do I know you’re saying that? Because that’s exactly what I was saying when Tina (<– amazing DFB photographer who’s frigging awesome) spotted this at the Plaza Ice Cream Parlor in the Magic Kingdom this week! But it’s OK. It’s brand new — not even on the menu yet — so we didn’t miss out on too much of the Donut Sundae lifespan. And we’re hoping that lifespan is a nice, long one! The Main Street Plaza Ice Cream Parlor can be found at the intersection of Main Street USA and Tomorrowland, just before you get to Cinderella Castle. And the sundae joins a few other must-have treats on the Ice Cream Parlor’s menu, including the house-made ice cream sandwich (preferably ordered with a drizzled sauce[nasa]} & \tiny \texttt{<bkspc> and fresh topping or two), the house-made vanilla ice cream and the Snickerdoodle Milkshake. The \$7.00 Donut Sundae features a house-made vanilla ice cream topped with chocolate covered flakes, chopped bits of fudge, a few walnuts (we think), a generous amount of whipped cream, the choice of mini or jumbo donut, and chocolate or strawberry syrup drizzle. And did I mention CHOCOLATE COVERED FLAKES?!?! Be prepared to shell out between \$7 and \$8 for this mega Donut Sundae. But you’ll probably find yourself licking your fingers of that heavenly chocolate flake as you down the treat, so I’d say it’s worth it. At least once. Yep, there it is on the ice cream Parlor menu now. My mouth is watering, just remembering all those amazing chocolate flakes! Do you love chocolate flakes? Who will be the first to give this sundae a try?}\\
\bottomrule
\end{tabularx}
\end{table}

\end{document}